\DeclareMathOperator*{\argmin}{arg\,min}
\newcommand{\bx}{\mathbf{x}}
\newcommand{\bs}{\mathbf{s}}
\newcommand{\bt}{\mathbf{t}}
\newcommand{\bu}{\mathbf{u}}
\newcommand{\eps}{\varepsilon}
\newcommand{\btheta}{\boldsymbol{\theta}}
\newcommand{\bphi}{\boldsymbol{\phi}}
\newcommand{\bw}{\mathbf{w}}
\newcommand{\cX}{\mathcal{X}}
\newcommand{\cI}{\mathcal{I}}
\newcommand{\cA}{\mathcal{A}}
\newcommand{\cC}{\mathcal{C}}
\newcommand{\cP}{\mathcal{P}}
\newcommand{\cO}{\mathcal{O}}
\newcommand{\cL}{\mathcal{L}}
\providecommand{\norm}[1]{\lVert#1\rVert}
\DeclareRobustCommand\onedot{.}
\def\@onedot{\ifx\@let@token.\else.\null\fi\xspace}
\def\eg{\emph{e.g}\onedot} 
\def\ie{\emph{i.e}\onedot}
\def\etal{\emph{et al}\onedot}
\definecolor{gray}{rgb}{0.96,0.96,0.96}
\begin{document}

\title{Maximum Consensus Parameter Estimation by Reweighted $\ell_1$ Methods}

\author{Pulak Purkait\inst{1} \and 
Christopher Zach\inst{1} \and Anders Eriksson\inst{2} 
}
\institute{
Toshiba Research Europe \and  
Queensland University of Technology
}

\maketitle


\begin{abstract}
Robust parameter estimation in computer vision is frequently accomplished by solving the maximum consensus (MaxCon) problem. Widely used randomized methods for MaxCon, however, can only produce {random} approximate solutions, while global methods are too slow to exercise on realistic problem sizes. Here we analyse MaxCon as iterative reweighted algorithms on the data residuals. We propose a smooth surrogate function, the minimization of which leads to an extremely simple iteratively reweighted algorithm for MaxCon. We show that our algorithm is very efficient and in many cases, yields the global solution. This makes it an attractive alternative for randomized methods and global optimizers. The convergence analysis of our method and its fundamental differences from the other iteratively reweighted methods are also presented. 
\end{abstract}

\begin{keywords}
  Reweighted $\ell_1$ methods, Maximum Consensus, M-estimator
\end{keywords}

\section{Introduction}\label{sec:intro}

Robust estimation of model parameters is a critical task in computer vision~\cite{meer04}. The literature on robust estimators is vast~\cite{huber2011robust}, encompassing different robust criteria and the associated algorithms. In computer vision, however, maximum consensus (MaxCon) is one of the most widely used robust criteria. Accordingly, algorithms for solving MaxCon have been researched extensively in recent years and people have developed a number of ways to solve this. 
In this article we seek for a fast iterative method for model estimation under MaxCon criterion. 

\begin{definition}{\bf MaxCon criterion}
Given a set of measurements $\cX = \{ \bx_i \}^{n}_{i=1}$, find the model parameters $\btheta \in \mathbb{R}^d$ that agree with as many of the data as possible. \ie, 
\begin{equation} 
\begin{aligned} 
\max_{\btheta,\; \cI \subseteq \cX} & 
~~ |\cI| 
~~ \text{subject to} & 
 r(\btheta; \bx_i) \leq \epsilon, \; \forall \bx_i \in \cI,
\end{aligned} \tag{P1}\label{eq:maxcon}
\end{equation}
where $r(\btheta; \bx_i)$ is the {absolute value} of the residual of $\btheta$ at the point $\bx_i$, and $\epsilon$ is the inlier threshold. The point set $\cI$ is called the consensus set w.r.t. $\btheta$. A data point $\bx_i$ is called an inlier w.r.t. $\btheta$ if $\bx_i \in \cI$; otherwise, it is called an outlier. 
\end{definition}

Problem~\eqref{eq:maxcon} can also be written by introducing slack variables, one for each data point, as follows:
\begin{equation} 
\begin{aligned} 
\min_{\btheta,\; \bs} &
~~~ \sum_{i=1}^n \mathbf{1}(s_i ) &
\text{subject to} & 
~~~ r(\btheta; \bx_i) \leq \epsilon + s_i, & s_i \ge 0, 
\end{aligned} \tag{P2}\label{eq:maxcon3}
\end{equation}
where $ \mathbf{1}(s_i)$ is an indicator function that returns $1$ if $s_i$ is \emph{non-zero}. 
Effectively, a point $\bx_i$ with a strictly positive slack $s_i$ is regarded as an outlier. Formulation~\eqref{eq:maxcon3} thus seeks the MaxCon solution by minimizing the number of outliers. The equivalence between the formulations  \eqref{eq:maxcon} and \eqref{eq:maxcon3} can be easily established. The optimized slack values can be interpreted as \emph{shrinkage residuals}, to borrow a term from the area of shrinkage operators~\cite{beck2009fast}. 
In most of the geometric problems, the residuals are linear or quasiconvex~\cite{olsson2010generalized}. The quasiconvex functions have convex sub-level sets and the constraints $r(\btheta; \bx_i) \leq \epsilon + s_i, s_i \ge 0$ form a convex set $\cC$ under quasiconvex (or linear) residuals. 

Now the question is whether minimizing the piecewise objective of \eqref{eq:maxcon3} under the convex constraints $\cC$ is an easy problem? In the following Lemma we show that the set of stationary points of \eqref{eq:maxcon3} is in-fact  the feasible set $\cC$ itself. This makes the problem difficult to optimize. 
\begin{lemma}\label{lem:one}
Any feasible point $(\btheta^\ast,\bs^\ast) \in \cC$ is a local minimum of \eqref{eq:maxcon3}. 
\end{lemma}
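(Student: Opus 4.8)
The plan is to exploit two features of the objective $f(\bs) := \sum_{i=1}^n \mathbf{1}(s_i)$: it is integer-valued, and the indicator $\mathbf{1}$ can only jump \emph{upward} as a slack moves away from zero. I would first partition the indices at the candidate point into the active set $Z = \{ i : s_i^\ast = 0 \}$ and the strictly positive set $P = \{ i : s_i^\ast > 0 \}$, so that the objective there equals exactly $f(\bs^\ast) = |P|$. The goal is then to exhibit a radius $\delta > 0$ for which every $(\btheta, \bs)$ with $\| (\btheta, \bs) - (\btheta^\ast, \bs^\ast) \| < \delta$ satisfies $f(\bs) \ge |P|$; restricting further to feasible points in $\cC$ can only help.

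Assuming $P \neq \emptyset$ (otherwise $f \ge 0 = f(\bs^\ast)$ trivially), I would take $\delta < \min_{i \in P} s_i^\ast$, which is positive because $P$ is finite. For any point in this neighborhood and each $i \in P$ we then have $s_i > s_i^\ast - \delta > 0$, whence $\mathbf{1}(s_i) = 1$; these indices contribute exactly $|P|$, the same as at the candidate. For each $i \in Z$ the term is either $0$ or $1$, so $\mathbf{1}(s_i) \ge 0 = \mathbf{1}(s_i^\ast)$. Adding the two contributions yields $f(\bs) \ge |P| = f(\bs^\ast)$ throughout the neighborhood, which is exactly local minimality.

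A compact way to phrase the same argument is that $f$ is lower semicontinuous and integer-valued: lower semicontinuity gives $\liminf_{(\btheta,\bs)\to(\btheta^\ast,\bs^\ast)} f(\bs) \ge f(\bs^\ast)$, and integrality then upgrades this to $f(\bs) \ge f(\bs^\ast)$ on a whole neighborhood, since a value strictly below $f(\bs^\ast)$ would have to be at least a full unit lower. Notably, neither route uses the geometry of $\cC$ at all; the bound holds for every nearby slack vector, feasible or not.

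I do not anticipate a real obstacle, as the statement is essentially about the discontinuity structure of $\mathbf{1}$. The one point to handle with care — and the likely source of confusion — is the temptation to argue that perturbing $\btheta$ might pull some positive slack $s_i^\ast$ all the way down to $0$ and thereby lower the count. This is impossible within a small neighborhood precisely because moving from $s_i^\ast > 0$ to $s_i = 0$ is a jump of the fixed size $s_i^\ast$, not an arbitrarily small step, and choosing $\delta$ below the smallest positive slack excludes it. The complementary fact, that driving a zero slack positive can only \emph{raise} $f$, is what elevates every feasible point from a mere stationary point to an actual local minimum, and is exactly the phenomenon the lemma means to expose.
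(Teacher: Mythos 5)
Your proof is correct and takes essentially the same route as the paper: both arguments set $\delta$ strictly below the smallest positive slack $\min_{i : s_i^\ast > 0} s_i^\ast$ and observe that within such a neighbourhood every index with $s_i^\ast > 0$ still has $s_i > 0$, so the count of nonzero slacks cannot drop. Your version is marginally tidier — you handle the edge case $P = \emptyset$ explicitly and note that feasibility of the perturbed point is irrelevant, where the paper instead disposes of infeasible neighbours by assigning them infinite objective — but these are refinements of the identical core idea, not a different argument.
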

\begin{proof}
 Let $\cO^\ast$ be the support set of $\bs^\ast$, \ie, $\cO^\ast = \{i : s_i^\ast > 0\}$. Let $s^{*+} := \min_{i \in \cO^\ast} s_i^\ast$ and $\delta \in (0,\; s^{*+})$. Then the MaxCon objective $|\cO| = \sum_i 1(s_i > 0)$
is non-decreasing in the max-norm neighbourhood
\begin{equation}
  N_\delta :=  \{ \bs \in \mathbb{R}^n : s_i \ge 0, \norm{\bs - \bs^\ast}_\infty \le \delta \} . \nonumber
\end{equation}
The above is true because, by construction for any feasible $\bs \in N_\delta$ has at least the same number of non-zeros as $\bs^\ast$. 
If $\bs \in N_\delta$ is not feasible, it leads to the infinite objective. Thus, the MaxCon objective $|\cO|$ is not lower than the value at $(\btheta^\ast,\bs^\ast)$ in the neighbourhood $\bs \in N_\delta$. In summary, all feasible points $(\btheta^\ast,\bs^\ast) \in \cC$ are local minima of \eqref{eq:maxcon3} in a neighbourhood of $(\btheta^\ast,\bs^\ast)$. \qed 
\end{proof} 
Thus MaxCon is a combinatorial optimization problem that is very challenging. It is typically approached by randomized sample-and-test methods, primarily \texttt{RANSAC}~\cite{fischler1981random} and its variations~\cite{choi09,torr2000mlesac,raguram2008comparative,raguram2013usac}. These randomized sampling methods are limited to a ``simple model'', \ie, would not work for Bundle Adjustment (or translation registration). Moreover, the random nature of the algorithms results in approximate solutions with no guarantees of local or global optimality; indeed sometimes the result can be far from the optimal.  
Presently, several globally optimal algorithms exist~\cite{olsson08,enqvist12,chincvpr2015,li09,zheng11}, however, they are usually based on branch-and-bound or brute force search, thus, they are only practical for small problem sizes $n$.

What is surely missing, therefore, is an \emph{efficient} and \emph{deterministic} algorithm for the MaxCon problem. A number of variations of \texttt{RANSAC} are available, \eg,  \texttt{LO-RANSAC}~\cite{chum03,lebeda2012fixing}, nonetheless, these methods follow similar mechanism of \texttt{RANSAC}. \texttt{MLEsac}~\cite{torr2000mlesac,tordoff2005guided} optimizes a (slightly) different criterion than MaxCon. Although, both are MLEs -- a noise model with uniform inliers and outliers is utilized in MaxCon; in contrast, \texttt{MLEsac} utilizes Gaussian
inliers and uniform outliers. In this work, we develop an iterative refinement scheme for MaxCon optimization \eqref{eq:maxcon3} that produces near optimal solutions. Thus, the proposed method lies in-between fast but very approximate solutions and superior but slow global optimal solution.




\section{Iterative Reweighted $\ell_1$ methods} \label{sec:background}

The convex relaxation to~\eqref{eq:maxcon3} is the minimization of  absolute sum of the shrinkage residuals (assumed bounded) 
\begin{equation}\label{eq:maxcon4} 
\begin{aligned}
& ~~~~~~~~\min_{\btheta,\; \bs}
& &  \sum_{i=1}^n s_i &
& \text{subject to}
& & r(\btheta; \bx_i) \leq \epsilon + s_i, & s_i \ge 0, \\
\end{aligned}
\end{equation}
which is also a robust estimation of the model parameters $\btheta$. 
 Olsson \etal~\cite{olsson10} {used this formulation} for outlier removal by iteratively solving~\eqref{eq:maxcon4} and removing the points with positive shrinkage residuals. 
Since $\ell_1$ norm is linear, \eqref{eq:maxcon4} optimizes a linear objective functional under convex constraints $\cC$ and hence can be solved efficiently with the existing optimizers~\cite{vanderbei2000loqo, wachter2006implementation}. 

The difference between the objective of \eqref{eq:maxcon3} and \eqref{eq:maxcon4} is in how the weighting of the magnitude of $\bs$ affects the optimal solution. Specifically, the larger coefficients are penalized more heavily in \eqref{eq:maxcon4} than smaller coefficients, unlike in \eqref{eq:maxcon3} where positive magnitudes are penalized equally. 
%

\subsection{Proposed Smooth Surrogate function}
The MaxCon \eqref{eq:maxcon3} cannot be solved directly due to the presence of a large number of local solutions. We utilize the regularized smooth surrogate $G_\gamma (\bs) = \sum_{i=1}^n \log (s_i + \gamma)$ to reduce the number of local solutions of $\ell_0$, and arrive at the following constrained concave minimization problem,
\begin{equation}
\begin{aligned}
& ~~~~~~~~ \min_{\btheta,\; \bs}
& & \sum_{i=1}^n \log(s_i + \gamma) 
& \text{subject to}
& & r(\btheta; \bx_i) \leq \epsilon + s_i, && s_i \ge 0.
\end{aligned} \tag{P3}\label{eq:dvrsty}
\end{equation}
$\gamma$ is a parameter chosen as a small positive number to ensure the measure is bounded from below, since $s_i$ can become vanishingly small. This damping factor $\gamma$ can also be observed as a regularization of the optimization \cite{chartrand2008iteratively}. 


\subsection{Minimization of the smooth surrogate function}\label{sec:analytical} 
The general form of \eqref{eq:dvrsty} under the convex constraints $\cC$ 
\begin{equation}
\begin{aligned}
 \min_{\mathbf{u}} f(\mathbf{u}) &~~~~ \text{   subject to } & \mathbf{u} \in \cC, 
\end{aligned}
\end{equation}
where  $f$ is concave and $\cC$ is convex. As a concave function $f$ lies below its tangent, one can improve upon a guess $\mathbf{u}$ of the solution by minimizing a linearisation of $f$ around $\mathbf{u}$. This yields the following iterative algorithm
\begin{eqnarray}
\begin{aligned}[clc]\label{eq:iter}
 \mathbf{u}^{(l+1)} & :=  \arg \min_{\mathbf{u}\in \cC} f(\mathbf{u}^{(l)}) + \left\langle \nabla f(\mathbf{u}^{(l)}),\; (\mathbf{u} - \mathbf{u}^{(l)}) \right\rangle 
 : =  \arg \min_{\mathbf{u}\in \cC} \left\langle \nabla f(\mathbf{u}^{(l)}),\; \mathbf{u} \right\rangle,  
\end{aligned}
\end{eqnarray}
with the initialization $\mathbf{u}^{0} \in \cC$. Each iteration is now the solution to a convex problem \cite{boyd2004convex}. For~\eqref{eq:dvrsty},  substituting $\nabla G_\gamma (\mathbf{s}) = \left[{1}/{s_i + \gamma}\right]$ in~\eqref{eq:iter} yields
 \begin{equation}\label{eq:iteranalysis} 
\begin{aligned}
& (\btheta^{(l+1)},\; \bs^{(l+1)}) 
:= \argmin_{(\btheta,\; \bs) \in \cC}
&&  \sum_{i=1}^n {s_i}/({s_i^{(l)} + \gamma}). 
\end{aligned}
\end{equation}
Defining $w^{(l)}_i = (s_i^{(l)} + \gamma)^{-1}$, we obtain the {proposed iterative reweighted method}: at each iteration it solves the following weighted problem
\begin{equation}
\left.
\begin{aligned}
 ~~~~~~(\btheta^{(l+1)},\; \bs^{(l+1)}) := \argmin_{\btheta,\; \bs} \sum_{i=1}^n {w_i^{(l)}} s_i \\
 \text{subject to~~~}  r(\btheta; \bx_i) \leq \epsilon + s_i, ~
 s_i \ge 0,~~~~~~\\
 ~~~~~~~~~~~~~w^{(l)}_i  := (s_i^{(l)} + \gamma)^{-1}.~~~~~~~~
\end{aligned}
\right\rbrace \tag{S1} \label{eq:algorithm} 
\end{equation}
The details of the initializations are in Section~\ref{sec:initi}. Note that computation of a step-size or a line-search is not required which can significantly speed-up the computation. 

As the residuals of most of the 3D geometric problems under study are quasiconvex \cite{kahl08}, our algorithm is guaranteed to converge (shown in Section~\ref{sec:convergence}) for any $r(\btheta;\bx_i)$ that is quasiconvex. Thus, the proposed  algorithm minimizes a linear objective under quasiconvex residuals~\cite{olsson2010generalized}. This motivates us to call proposed algorithm {IR-LP} to distinguish it from traditional IRL1. Note that under linear residuals, IR-LP solves only a linear program (LP) in each iteration. 

\paragraph{Other properties of \eqref{eq:dvrsty}} Let $(\btheta^\ast,\bs^\ast)$ be a minimizer of \eqref{eq:dvrsty}. 
Then, the Lagrangian is given by
\begin{equation}
\begin{aligned}
  {\cal L}(\bs, \btheta; \lambda, \mu) &= \sum_i \Big( \log(s_i + \gamma) \Big. 
 + \Big. \lambda_i \big( r(\btheta;\; \bx_i) - \eps - s_i \big) - \mu_i s_i \Big)
  \end{aligned}
\end{equation}
where $\lambda_i \ge 0$ and $\mu_i \ge 0$ are Lagrange multipliers. The KKT conditions are as follows
\begin{equation}\label{equ:IRL1KKT}
\begin{aligned} 
 &  \frac{1}{{s_i^\ast} + \gamma} - \lambda_i -  \mu_i = 0,~ \sum_{i=1}^n \lambda_i \nabla_{\btheta}{r(\btheta^\ast;\;\bx_i)} = 0 &\\
 & \lambda_i[r(\btheta^\ast; \bx_i) - \epsilon - s_i^\ast] = 0,\; \mu_i s_i^\ast = 0&\\
  & r(\btheta^\ast; \bx_i) \le \epsilon + s_i^\ast,\; s_i^\ast \ge 0,\; \lambda_i \ge 0,\; \mu_i \ge 0.&
\end{aligned}
\end{equation}
From the first condition, we know $\lambda_i + \mu_i = 1/(s_i^* + \gamma) >
0$ which implies both of $\mu_i$ and $\lambda_i$ can not be zero simultaneously. Hence, for each $i$, one of the constraints $s_i^* \ge 0$ or
$r(\btheta^*;\bx_i) \le \eps + s_i^*$ is always active. A local minimum $(\btheta^\ast,\bs^\ast)$ is, thus, 
characterized by
\begin{equation}
  s_i^* =
  \begin{cases}
    0 & \text{if } i \in \cI \\
    r(\btheta^*;\bx_i) - \eps & \text{if } i \in \cO,
  \end{cases} \nonumber
\end{equation}
where $\cO$ is the support set of $\bs^*$. $\cO$ can also be considered as an outlier set as $s_i^* > 0$  corresponds to an outlier point. Thus, $\cI$ can be considered as an inlier set. Note that for $i \in \cO$, $\lambda_i = 1/(s_i^* + \gamma)$ and for $i \in \cI$, $\lambda_i = 0$. Thus by \eqref{equ:IRL1KKT},   
\begin{equation}
\sum_{i:r(\btheta^\ast;\;\bx_i) > \epsilon} \frac{ \nabla_{\btheta}{r(\btheta^\ast;\;\bx_i)}}{r(\btheta^\ast;\;\bx_i) - \epsilon + \gamma} = 0, 
\end{equation}
which says that the weighted sum of the gradients corresponding to the outliers at a minimum $(\btheta^\ast,\bs^\ast)$ vanishes.  
However a direct relationship with the optimal choice of $\gamma$ and the number of outliers can not be derived which would have given a potential choice of $\gamma$. The choices of $\gamma$ are further discussed in Section \ref{sec:initi}. 


\begin{figure} \center 
\subfigure[Maxcon \eqref{eq:maxcon3}]{\includegraphics[width=0.48\textwidth]{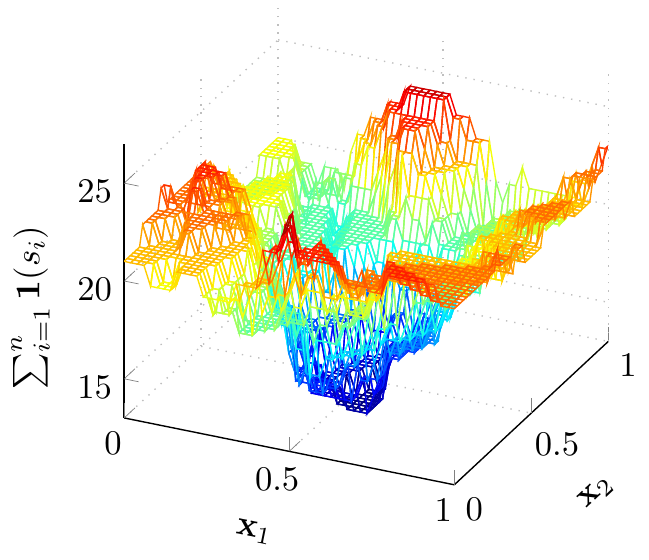}\label{fig:ell1}} 
\subfigure[Shrinkage $\ell_1$ \eqref{eq:maxcon4} ]{\includegraphics[width=0.48\textwidth]{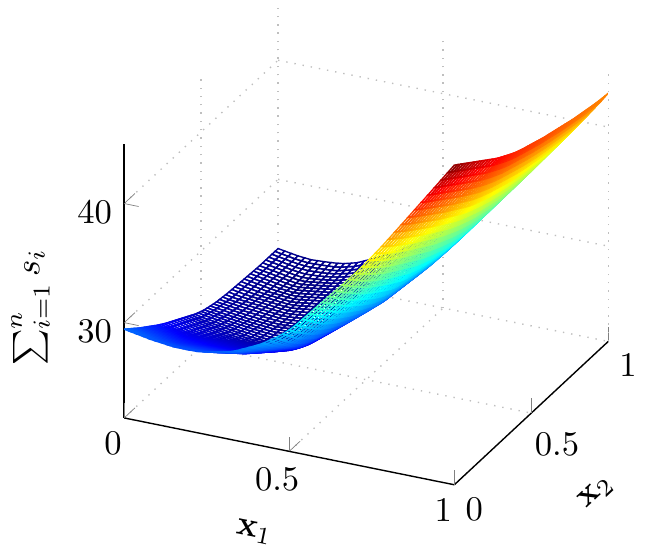}\label{fig:ell0}} \\
\subfigure[$G_\gamma (\bs)$ \eqref{eq:dvrsty}, $\gamma = 0.0001$]{\includegraphics[width=0.48\textwidth]{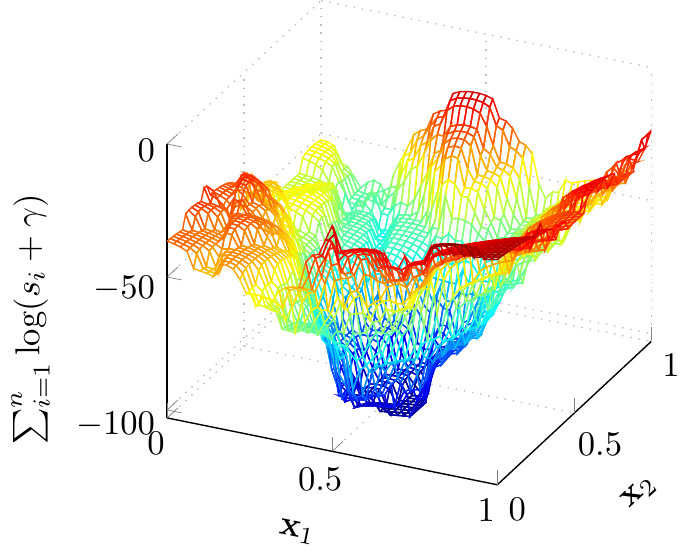}\label{fig:gamma00001}} 
\subfigure[$G_\gamma (\bs)$ \eqref{eq:dvrsty}, $\gamma = 0.001$]{\includegraphics[width=0.48\textwidth]{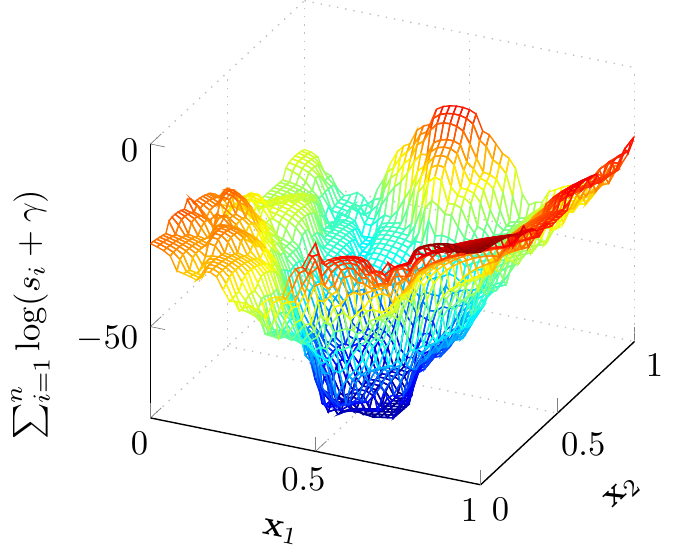}\label{fig:gamma0001}} \\
\subfigure[$G_\gamma (\bs)$ \eqref{eq:dvrsty}, $\gamma = 0.01$]{\includegraphics[width=0.48\textwidth]{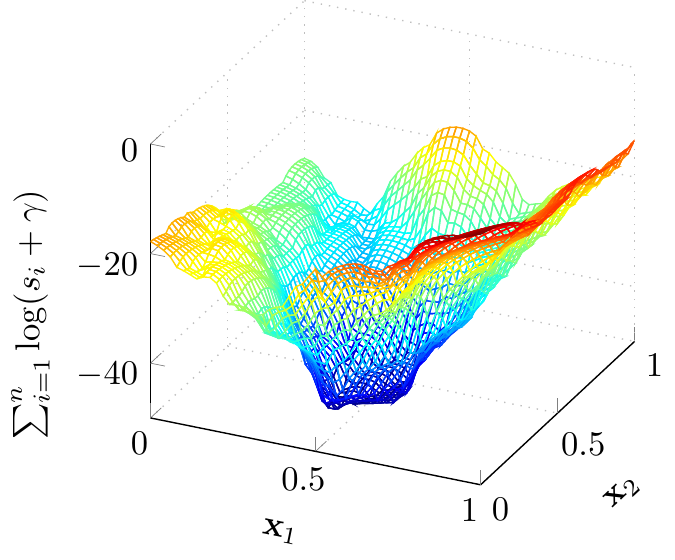}\label{fig:gamma001}} 
\subfigure[$G_\gamma (\bs)$ \eqref{eq:dvrsty}, $\gamma = 0.1$]{\includegraphics[width=0.48\textwidth]{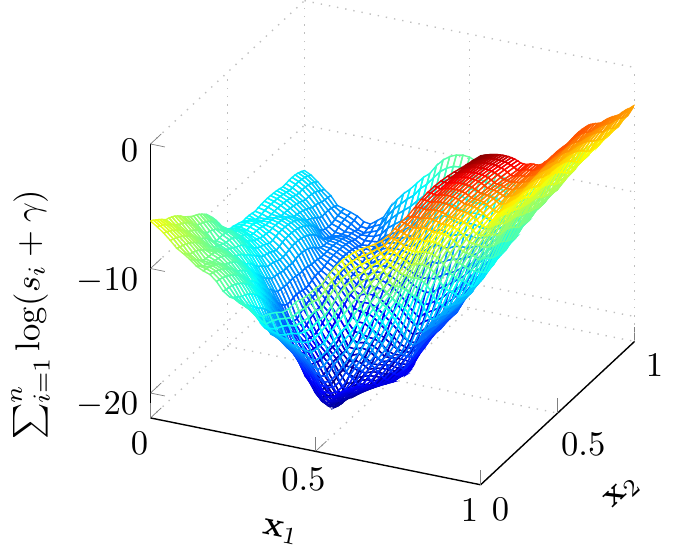}\label{fig:gamma01}} 
\caption{Different objectives are plotted on a synthetic data.}
\label{fig:logComparison}
\end{figure}

Compared to \eqref{eq:maxcon3}, where all feasible points are local minima due to lemma \ref{lem:one}, \eqref{eq:dvrsty} reduces the number of local minima by increasing $\gamma$. 
In Figure \ref{fig:logComparison}, we display the objective of \eqref{eq:dvrsty} on a synthetic 2D line fitting problem, under different values of $\gamma$. As $\gamma$ increases, the topographic surface of the objective function is flatten and fewer local minima are observed. This is an empirical evidence that $G_\gamma$ smoothens the objective of $\eqref{eq:maxcon3}$ in a sensible way. The choice of $\gamma$ is discussed further in Section \ref{sec:parameters}. 

\subsection*{The connection with basis pursuit} 
In the basis pursuit problem, one aims to recover the sparsest signal $\btheta \in \mathbb{R}^d$ from the measurements $\mathbf{y} \in \mathbb{R}^n$, with respect to a dictionary $\bphi \in \mathbb{R}^{n \times d}$:
\begin{equation} \label{eq:basisprst}
\begin{aligned}
\min_{\btheta} & 
~~~\sum_{k=1}^d \mathbf{1}(\theta_k ) 
~~\text{subject to} & 
\mathbf{y} = \bphi \btheta.
 \end{aligned} 
\end{equation}
{Candes \etal~\cite{candes2008enhancing} also proposed a smooth surrogate $\sum_{k=1}^d\log(|\theta_k|+\gamma)$ of the objective above that results an iteratively reweighted $\ell_1$-norm minimization (IRL1) algorithm for~\eqref{eq:basisprst}. Specifically, at the $l$-th iteration, the following weighted $\ell_1$ problem is solved }
\begin{equation} \label{eq:basisirl1}
\begin{aligned}
\btheta^{(l+1)}  := & \argmin_{\btheta}
~~ \sum_{k=1}^d w^{(l)}_k | \theta_k | 
& \text{subject to } 
 ~~~\mathbf{y} = \bphi \btheta,\\
w^{(l)}_k  := & (|\theta_k^{(l)}| + \gamma)^{-1}.
\end{aligned}
\end{equation}
Though related,~\eqref{eq:maxcon3} and~\eqref{eq:basisprst} are quite different  problems. 
\begin{itemize}
\setlength\itemsep{0.0em}\setlength\parsep{-0.0em}
\item The former seeks sparsity on the shrinkage residuals $\bs$ (parameters $\btheta$ allowed to be dense), while the latter seeks sparsity in $\btheta$.
\item Further, the constraints in~\eqref{eq:maxcon3} are usually over-determined ($n > d$), while for~\eqref{eq:basisprst} the constraints are  under-determined ($d > n$). 
\item Moreover, the proposed method~\eqref{eq:algorithm} can also be treated as maximization of residual diversity~\cite{gorodnitsky1997sparse,chartrand2007exact}. Interested readers are referred to the extended version. 
\end{itemize}
Although, the proposed reweighted algorithm is inspired by Candes \etal~\cite{candes2008enhancing}, above set our work apart from~\cite{candes2008enhancing} that has different theoretical underpinnings. Thus, the methods for basis pursuit problems cannot be directly adapted here.

\section{Convergence analysis}\label{sec:convergence}
In this section, we analyse the convergence of the proposed  algorithm \eqref{eq:algorithm}. Let $\cA:U\to\cP(U)$ be an algorithm defined on a set $U$ where $\cP(U)$ is the power set of $U$. 
Given $\cA$, Zangwill's global convergence theorem \cite{sriperumbudur2012proof} is stated as 
\begin{theorem}\label{th:gconverge}
Let $\cA:U\to\cP(U)$ generate a sequence $\{\bu^{(l)}\}_{l=0}^\infty$ through the iteration $\bu^{(l+1)} \in \cA(\bu^{(l)})$, given an initialization $\bu^{(0)} \in U$. Let $\Gamma \subset U$ be a set called solution set. 
Further, let $\cA$ satisfy the following constraints 
\begin{enumerate}[C1.] 
\setlength\itemsep{0.0em}
\item The points in $\{\bu^{(l)}\}$ are contained in a compact subset. 
\item If $\Gamma$ is the solution space of $\cA$, then, there is a continuous function $\cL(\bu):U \to \mathbb{R}$ satisfying
\begin{equation}
\begin{cases} 
\cL(\bu^{(l+1)}) < \cL(\bu^{(l)})  & \text{if } \bu^{(l)} \not\in \Gamma \\
\cL(\bu^{(l+1)}) \le  \cL(\bu^{(l)})  & \text{if } \bu^{(l)} \in \Gamma 
\end{cases}
\end{equation}
\item The algorithm $\cA$ is closed at points outside $\Gamma$.
\end{enumerate}
Then, every convergent subsequence of  $\{\bu^{(l)}\}_{l=0}^\infty$ converges to a solution of $\cA$. 
\end{theorem}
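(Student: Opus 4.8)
The plan is to argue by contradiction on an arbitrary convergent subsequence, using the descent function $\cL$ from C2 as the central tool. First I would establish that the scalar sequence $\{\cL(\bu^{(l)})\}$ converges. By C2 we have $\cL(\bu^{(l+1)}) \le \cL(\bu^{(l)})$ at every step (strict decrease off $\Gamma$, non-increase on $\Gamma$), so this sequence is monotonically non-increasing. By C1 the iterates lie in a compact set $K$, and $\cL$ is continuous, hence bounded below on $K$; a bounded, monotone non-increasing sequence converges, say $\cL(\bu^{(l)}) \to \cL^\ast$.

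Next, I would fix any convergent subsequence $\bu^{(l_k)} \to \bu^\ast$ (such a subsequence exists by C1). By continuity $\cL(\bu^{(l_k)}) \to \cL(\bu^\ast)$, and since the whole scalar sequence converges to $\cL^\ast$, I obtain $\cL(\bu^\ast) = \cL^\ast$. The goal is then to show $\bu^\ast \in \Gamma$, which is precisely the statement that $\bu^\ast$ is a solution of $\cA$.

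Suppose, for contradiction, $\bu^\ast \notin \Gamma$. The successor iterates $\{\bu^{(l_k+1)}\}$ also lie in $K$, so they admit a further convergent sub-subsequence; relabelling indices, I may assume $\bu^{(l_k+1)} \to \bu^{\ast\ast}$ while still $\bu^{(l_k)} \to \bu^\ast$. Because $\bu^{(l_k+1)} \in \cA(\bu^{(l_k)})$ and $\cA$ is closed at $\bu^\ast$ by C3 (applicable since $\bu^\ast \notin \Gamma$), closedness yields $\bu^{\ast\ast} \in \cA(\bu^\ast)$. Now the strict-decrease branch of C2 at $\bu^\ast \notin \Gamma$ gives $\cL(\bu^{\ast\ast}) < \cL(\bu^\ast) = \cL^\ast$. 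On the other hand, $\{\cL(\bu^{(l_k+1)})\}$ is a subsequence of the convergent scalar sequence, so $\cL(\bu^{(l_k+1)}) \to \cL^\ast$, while continuity gives $\cL(\bu^{(l_k+1)}) \to \cL(\bu^{\ast\ast})$; hence $\cL(\bu^{\ast\ast}) = \cL^\ast$, contradicting the strict inequality. Therefore $\bu^\ast \in \Gamma$, and since the subsequence was arbitrary the claim follows.

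I expect the main obstacle to be the careful handling of the point-to-set map through the double subsequence extraction: one must pass to a convergent subsequence of the \emph{successors} $\bu^{(l_k+1)}$ without destroying convergence of $\bu^{(l_k)}$, and then apply the precise definition of closedness (if $\bu_k \to \bu$, $\bv_k \in \cA(\bu_k)$ and $\bv_k \to \bv$, then $\bv \in \cA(\bu)$) to place the limit point inside $\cA(\bu^\ast)$. Everything else, namely the monotone convergence of $\cL$ and the continuity bookkeeping identifying all relevant limits with $\cL^\ast$, is routine once this extraction is set up correctly.
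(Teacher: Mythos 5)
The paper never proves this statement: it is imported verbatim as Zangwill's global convergence theorem and attributed to \cite{sriperumbudur2012proof}, so there is no internal proof to compare yours against. Your argument is the standard textbook proof of that theorem, and its structure is sound: monotone non-increase of $\cL(\bu^{(l)})$ from C2, boundedness below from C1 plus continuity, hence $\cL(\bu^{(l)}) \to \cL^\ast$; identification $\cL(\bu^\ast) = \cL^\ast$ along the convergent subsequence; the double subsequence extraction on the successors $\bu^{(l_k+1)}$; closedness of $\cA$ at $\bu^\ast$ to place $\bu^{\ast\ast} \in \cA(\bu^\ast)$; and the contradiction between strict descent and $\cL(\bu^{\ast\ast}) = \cL^\ast$.

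One point deserves explicit flagging. As the paper words C2, the descent inequalities are asserted only along the generated iterates $\bu^{(l)}$. Your key step --- invoking the strict-decrease branch at the limit point $\bu^\ast \notin \Gamma$ to conclude $\cL(\bu^{\ast\ast}) < \cL(\bu^\ast)$ for $\bu^{\ast\ast} \in \cA(\bu^\ast)$ --- is not licensed by that literal reading, since $\bu^\ast$ need not be one of the iterates. The standard hypothesis in Zangwill's theorem, and the one actually stated in \cite{sriperumbudur2012proof}, is map-level: for every $\bu \notin \Gamma$ and every $\bv \in \cA(\bu)$ one has $\cL(\bv) < \cL(\bu)$ (and $\cL(\bv) \le \cL(\bu)$ when $\bu \in \Gamma$). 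Under the paper's sequence-only phrasing the theorem is in fact not provable, and your step is exactly where any proof must appeal to the stronger condition; so your proof is correct precisely under the standard reading, and you should say explicitly that this is the reading you adopt rather than apply C2 silently at a point that is not an iterate.
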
 


\begin{lemma}\label{th:lemma}
Let us define the solution space $\Gamma$ as the set of stationary points of  \eqref{eq:dvrsty}.  
Then sequence $\{\bs^{(l)}\}_{l=0}^\infty$ generated by the proposed algorithm $\cA$ \eqref{eq:algorithm} satisfies the global convergence theorem.
\end{lemma}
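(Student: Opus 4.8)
The plan is to instantiate Zangwill's theorem (Theorem~\ref{th:gconverge}) with the algorithm map $\cA$ of \eqref{eq:algorithm}, the candidate descent function $\cL(\btheta,\bs) := G_\gamma(\bs) = \sum_{i=1}^n \log(s_i + \gamma)$, and the solution set $\Gamma$ taken to be the stationary points of \eqref{eq:dvrsty}. It then suffices to verify the three hypotheses C1--C3 in turn, treating the full iterate $\bu = (\btheta,\bs) \in \cC$ as the variable and noting that $\cL$ depends only on its $\bs$-component.

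I would begin with C2, the descent condition, since it is the heart of the argument. Each iteration minimizes the linearization of the concave $G_\gamma$ about the current point, so because a concave function lies below its tangent,
\begin{equation}
G_\gamma(\bs^{(l+1)}) \le G_\gamma(\bs^{(l)}) + \left\langle \nabla G_\gamma(\bs^{(l)}),\; \bs^{(l+1)} - \bs^{(l)} \right\rangle. \nonumber
\end{equation}
As $\bs^{(l+1)}$ is the feasible minimizer of the linear functional $\langle \nabla G_\gamma(\bs^{(l)}),\,\cdot\,\rangle$ while $\bs^{(l)}$ is itself feasible, the inner-product term is nonpositive, giving $G_\gamma(\bs^{(l+1)}) \le G_\gamma(\bs^{(l)})$. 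To obtain the strict inequality required off $\Gamma$, I would use the strict concavity of $\log$: if $\bs^{(l)}$ is not stationary it fails to minimize the linearized subproblem, so some feasible point attains a strictly smaller linear value, forcing $\bs^{(l+1)} \neq \bs^{(l)}$; strict concavity then makes the tangent bound strict and yields $G_\gamma(\bs^{(l+1)}) < G_\gamma(\bs^{(l)})$.

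For C1, I would invoke the standing assumption that the shrinkage residuals are bounded, so that the feasible set $\cC$ projected onto $\bs$ is a bounded subset of the nonnegative orthant. Combined with the monotone decrease from C2, the sequence is confined to the sublevel set $\{(\btheta,\bs)\in\cC : G_\gamma(\bs) \le G_\gamma(\bs^{(0)})\}$, which is closed and bounded, hence compact; this is exactly the required containment.

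The main obstacle is C3, closedness of the point-to-set map $\cA$ away from $\Gamma$, since the $\argmin$ in \eqref{eq:algorithm} need not be unique. I would establish it through the continuity of parametric optimization: the weights $w_i^{(l)} = (s_i^{(l)} + \gamma)^{-1}$ depend continuously on $\bs^{(l)}$ (the strict positivity of $\gamma$ is essential here, keeping the weights finite), the feasible set $\cC$ is fixed, compact, and carved out by quasiconvex constraints, and the weighted objective is jointly continuous in the iterate and the weights. By the Berge maximum theorem the solution correspondence is then closed, i.e.\ for any sequences with $\bu^{(l)} \to \bu$ and $\bt^{(l)} \to \bt$ obeying $\bt^{(l)} \in \cA(\bu^{(l)})$, the limit satisfies $\bt \in \cA(\bu)$. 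Having verified C1--C3, Theorem~\ref{th:gconverge} applies and every convergent subsequence of $\{\bs^{(l)}\}$ converges to a stationary point of \eqref{eq:dvrsty}, which is precisely the assertion of the lemma.
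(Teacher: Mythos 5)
Your proof is correct and follows the same skeleton as the paper's---instantiate Zangwill's theorem (Theorem~\ref{th:gconverge}) with $\cL = G_\gamma$ and verify C1--C3---but the two key conditions are established by genuinely different arguments. For the descent condition C2, the paper computes $\frac{1}{n}\big(\cL(\bs^{(l+1)}) - \cL(\bs^{(l)})\big) = \frac{1}{n}\sum_i \log\frac{s_i^{(l+1)}+\gamma}{s_i^{(l)}+\gamma}$ and bounds it by Jensen's inequality (strict concavity of $\log$) followed by the minimizing property of the weighted subproblem; you instead run the standard majorize--minimize argument: the tangent-plane upper bound for the concave $G_\gamma$ combined with the fact that $\bs^{(l+1)}$ minimizes the linearization while $\bs^{(l)}$ is itself feasible. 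Your route is actually cleaner on the strictness point: off $\Gamma$, the point $\bs^{(l)}$ fails to solve the linearized subproblem, so the inner-product term is strictly negative and strict descent follows from that alone---the appeal to strict concavity is a harmless redundancy---whereas the paper's claim that equality in Jensen forces $\bs^{(l+1)}=\bs^{(l)}$ is looser (Jensen's equality only forces the ratios to be equal). For C3 the paper asserts that $\cA$ is a continuous map and that a continuous map on a compact set is closed; you correctly treat $\cA$ as a point-to-set map (the $\argmin$ of \eqref{eq:algorithm} need not be a singleton) and derive closedness of the solution correspondence from Berge's maximum theorem, noting that $\gamma>0$ keeps the weights $w_i = (s_i+\gamma)^{-1}$ finite and continuous; this is more rigorous than the paper at its weakest point. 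Your C1 (confinement to a closed, bounded sublevel set of $\cC$, using boundedness of the slacks) matches the paper's argument in substance, so overall your proposal proves the lemma and, in C2 and C3, does so with sharper justification than the original.
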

\begin{proof}
We show that the conditions for the Theorem \ref{th:gconverge} hold for the sequence $\{\bs^{(l)}\}_{l=0}^\infty$ generated by the algorithm $\cA$.  \\
C1. Every closed and bounded set is compact. An equivalent condition is that the points in the sequence and its accumulation points are bounded. We can certainly find an upper bound of sequence $\{\bs^{(l)}\}_{l=0}^\infty$ generated by $\cA$. Such bounds exist as for a finite solution with finite points residuals cannot be arbitrary large. 
Moreover, the accumulation points are no greater than the bounding values. Therefore, such a compact subset $S$ can be constructed from the bounds. \\
C2. Given a real number $\gamma > 0$, 
 define $\cL(\bs):\cC\to \mathbb{R}$
\begin{equation}\label{eq:mainfunc}
\cL(\bs) = \sum_{i=1}^n \log (s_i + \gamma)
\end{equation}
where $\cC$ is the feasible region defined by the constraints in \eqref{eq:maxcon3}. 
For the points $\bs^{(l)} \not\in \Gamma$ 
\begin{equation}
\begin{aligned}
\frac{1}{n}\Big(\cL( \bs^{(l+1)}) - \cL(\bs^{(l)})\Big) 
~~~ = &  \sum_{i=1}^n \Big ( \frac{1}{n}\log (s_i^{(l+1)} + \gamma) - \frac{1}{n}\log (s_i^{(l)} + \gamma)\Big ) \\ 
~~~ = &  \sum_{i=1}^n \frac{1}{n}\log \frac{s_i^{(l+1)} + \gamma}{s_i^{(l)} + \gamma } <  \log \Big ( \frac{1}{n} \sum_{i=1}^n  \frac{s_i^{(l+1)} + \gamma}{s_i^{(l)} + \gamma } \Big ) \\ 
~~~ \le &  \log \Big ( \frac{1}{n} \sum_{i=1}^n  \frac{s_i^{(l)} + \gamma}{s_i^{(l)} + \gamma } \Big )  = 0 \Rightarrow \cL(\bs^{(l+1)}) < \cL(\bs^{(l)}). \nonumber
\end{aligned}
\end{equation}
Here the first inequality follows from the strict concavity property of the $\log (.)$ function. Note that the equality happens only when $\bs^{(l+1)} = \bs^{(l)}$ which implies $\left\langle \nabla f(\mathbf{u}^{(l)}),\; \mathbf{u} \right\rangle = 0$ (by eq. \eqref{eq:iter}). Thus the inequality is strict for $\bs^{(l)} \not\in \Gamma$. The second inequality follows from the fact that $\bs^{(l)}$ is obtained by minimizing $\sum_{i=1}^n  {s_i }/{(s_i^{(l)} + \gamma) } $, $\bs \in \cC$ and $\log (.)$ is monotonic increasing. Moreover, for $\bs^{(l)} \in \Gamma$ 
\begin{equation}
\begin{aligned}
~~~~ & \bs^{(l+1)} = \bs^{(l)} & \implies \cL(\bs^{(l+1)}) =  \cL( \bs^{(l)})~\\
\text{and ~~~~~ } & \bs^{(l+1)} \neq \bs^{(l)} & \implies \cL(\bs^{(l+1)}) <  \cL( \bs^{(l)}). 
\end{aligned}
\end{equation} 
Thus $\bs^{(l)} \in \Gamma$ implies $ \cL(\bs^{(l+1)}) \le  \cL( \bs^{(l)})$. \\ 
C3. A continuous mapping from a compact set to a set of real numbers is a closed map \cite{loring10}. The map $\cA$ is continuous and the set $S$, containing the elements of $\{\bs^{(l)}\}_{l=0}^\infty$, the range of the mapping $\cA$ in our algorithm, has already been proven as compact. \qed
\end{proof}  
%
  
\begin{theorem}\label{th:converge}
 For any starting point $\{\btheta^{(0)},\bs^{(0)}\} \in \cC$, there exist a subsequence of the sequence generated by \eqref{eq:algorithm} converges asymptotically to a stationary point  of \eqref{eq:dvrsty}. 
\end{theorem}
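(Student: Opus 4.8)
The plan is to obtain this statement as a direct corollary of Lemma~\ref{th:lemma} together with Zangwill's global convergence theorem (Theorem~\ref{th:gconverge}), since almost all of the work has already been done. Lemma~\ref{th:lemma} verified that the three hypotheses C1--C3 hold for the sequence $\{\bs^{(l)}\}_{l=0}^\infty$ generated by the map $\cA$ of \eqref{eq:algorithm}, with Lyapunov function $\cL(\bs)=\sum_{i=1}^n\log(s_i+\gamma)$ and solution set $\Gamma$ defined as the stationary points of \eqref{eq:dvrsty}. What remains is to assemble these pieces, extract an honest limit, and confirm that the limit is a stationary point of the joint problem.

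First I would invoke condition C1, which furnishes a compact set $S\subset\cC$ containing the whole sequence $\{\bs^{(l)}\}$ together with its accumulation points. By the Bolzano--Weierstrass property of compact sets, $\{\bs^{(l)}\}$ therefore admits at least one convergent subsequence $\{\bs^{(l_k)}\}$ with a limit $\bs^\ast\in S$; this guarantees that the object whose convergence we wish to assert actually exists. I would then apply Theorem~\ref{th:gconverge}, whose conclusion is that \emph{every} convergent subsequence of $\{\bs^{(l)}\}$ converges to a solution of $\cA$, i.e.\ to a point of $\Gamma$. Applied to $\{\bs^{(l_k)}\}$ this gives $\bs^\ast\in\Gamma$, so by the definition of $\Gamma$ in Lemma~\ref{th:lemma} the point $\bs^\ast$ is a stationary point of \eqref{eq:dvrsty}. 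Reattaching the parameter block via feasibility, $(\btheta^{(l_k)},\bs^{(l_k)})\in\cC$, and passing to a further subsequence along which $\btheta^{(l_k)}\to\btheta^\ast$ then yields a joint limit $(\btheta^\ast,\bs^\ast)$, which is exactly the asserted subsequential limit.

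The step I expect to require the most care is the identification of a ``solution of $\cA$'' with a stationary point of \eqref{eq:dvrsty}. This rests on the fixed-point characterization implicit in C2: the strict descent $\cL(\bs^{(l+1)})<\cL(\bs^{(l)})$ fails precisely when $\bs^{(l+1)}=\bs^{(l)}$, and at such a fixed point the linearized subproblem \eqref{eq:iter} reads $\bu^{(l)}\in\argmin_{\bu\in\cC}\langle\nabla f(\bu^{(l)}),\bu\rangle$. This is nothing but the first-order variational-inequality condition $\langle\nabla f(\bu^\ast),\bu-\bu^\ast\rangle\ge 0$ for all $\bu\in\cC$ of the concave program $\min_{\bu\in\cC}f(\bu)$, i.e.\ a KKT point matching the conditions already recorded in \eqref{equ:IRL1KKT}. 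I would make this equivalence fully explicit, so that ``the limit is a solution of $\cA$'' is unambiguously read as ``the limit is stationary for \eqref{eq:dvrsty}''.

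A secondary technical point worth closing is that Lemma~\ref{th:lemma} phrased C1--C3 in terms of $\bs$ alone, whereas the present statement concerns the joint iterate $(\btheta,\bs)$. The descent argument transfers immediately because $\cL$ does not depend on $\btheta$, but to extract a jointly convergent subsequence I must also bound the $\btheta$-iterates. I would justify this exactly as in C1: for a finite instance the feasible $\btheta$ compatible with the bounded residuals lies in a bounded region, so the joint iterates inhabit a compact subset of $\cC$ and a jointly convergent subsequence exists. With that boundedness in hand the two extractions above can be carried out simultaneously, and the argument goes through without further modification.
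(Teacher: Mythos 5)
Your proposal is correct and follows essentially the same route as the paper's own proof: compactness of the iterate sequence via condition C1, extraction of a convergent subsequence by Bolzano--Weierstrass, and then Lemma~\ref{th:lemma} together with Zangwill's theorem to conclude that the subsequential limit lies in $\Gamma$, i.e.\ is a stationary point of \eqref{eq:dvrsty}. The only difference is that you fill in details the paper leaves implicit --- the explicit identification of ``solutions of $\cA$'' with stationary points, and the further extraction of a convergent $\btheta$-subsequence to obtain a joint limit --- which strengthens rather than changes the argument.
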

\begin{proof}
The sequence $\{\bs^{(l)}\}_{l=0}^\infty$ is compact. Therefore, there must exist a convergent subsequence $\{\bs^{(p_l)}\}_{l=0}^\infty$ of  $\{\bs^{(l)}\}_{l=0}^\infty$. By Lemma \ref{th:lemma}, the convergent subsequence $\{\bs^{(p_l)}\}_{l=0}^\infty$ converge to a stationary point of \eqref{eq:dvrsty}. 

\qed

%
\end{proof}
The above theorem shows that the objective of \eqref{eq:dvrsty} generated by the sequence $\{\btheta^{(l)},\bs^{(l)}\}_{l=0}^\infty$ strictly decreases and converges to a local minimum or a saddle point of \eqref{eq:dvrsty}. Further, by lemma \ref{lem:one}, any feasible solution of \eqref{eq:dvrsty} is also a local minimum of  \eqref{eq:maxcon3}. Thus, the proposed algorithm \eqref{eq:algorithm} is guaranteed to find a local minimum of \eqref{eq:maxcon3}. 



\section{Runtime Complexity}
The complexity of the proposed methods IR-LP depends on the complexity of the each iteration as maximum number of iterations $L$ is fixed. The global methods \cite{chincvpr2015} and \cite{olsson08} that require $\cO(k^{d+1})$ and $\cO \big((d+1)^k\big)$ number of iterations respectively, where $d$ is the dimension of the problem and $k$ is the number of outliers. Note that the above numbers are enormous compared to $L$ (choices of $L$ are discussed in results Section of the extended version). Further, in each iteration, those global methods solve a similar linear program or convex program. 
Furthermore, like \cite{chincvpr2015}, except the initial iteration, we initialize by the solution of the previous iteration. 

\emph{ Linear Residuals}
IR-LP solves a LP in each iteration which is remarkably efficient in practice. Moreover, as the coefficient matrix is extremely sparse, it becomes an effective solver \cite{boyd2004convex}.  
Although, there are worst-case polynomial time algorithms for solving a LP, \eg Karmakar's projective algorithm $\cO(n^{3.5}$), we utilize an approximate solution\footnote{{Since $\bs^{(l)}$ is only used to compute the weights $\bw^{(l+1)}$ in the next iteration, an approximate solution, which still minimizes the objective, is sufficient to initialize $\bs^{(l+1)}$.}}, which is solved in linear time \cite{megiddo1984linear}. 

\emph{ Quasiconvex Residuals}
IR-LP minimize linear objective under convex constraints that can be solved by an  interior point algorithm~\cite{ye1989extension} in polynomial time.

\section{Parameter Settings}\label{sec:parameters}
\subsection*{Initialization} \label{sec:initi}

The initialization of the shrinkage residuals $\bs^{(0)}$ can be aided using any fast approximate method. 
However, the initialization should not be too far from the optimal solution.  {In all of our experiments, unless stated otherwise, we initialize $\bs^{(0)} = \mathbf{1}$ and then iterate the first iteration to find a suboptimal solution $\btheta^{(1)}$. Again, $\btheta^{(1)}$ is utilized to update the shrinkage residuals $\bs^{(1)}$.  
 A better initialization (\texttt{RANSAC} solution or iterative $\ell_\infty$~\cite{sim06}) leads to a better solution in some cases, however, our chosen trivial initialization works  well in most of the applications. 
The results under different initializations are discussed in the extended version. 

\subsection*{Selecting $\gamma$}\label{sec:parachoice}

In the proposed algorithm, the constant $\gamma$ serves to bound the smooth objective from below, and also regularizes the optimization to avoid the stiffness to the solution where $s^{(l)}_i = 0$; intuitively, note that there will be points (\ie, the inliers) where the slack values are zero. In general, the algorithm works reasonably well with a small independent choice of $\gamma$. In this work, however, we chose $\gamma = 0.01$ for all the experiment reported and got satisfactory results. 

In the literature of reweighted methods, some works~\cite{candes2008enhancing,wipf2010iterative} exhibit better performance on some datasets by adapting $\gamma$. Specifically, \cite{candes2008enhancing} chose $\gamma^{(l+1)} = \max\{{\bs^{(l)}}^+, 0.01\}$ where $\bs^+$ are the positive slack variables, \cite{wipf2010iterative} utilized an annealing schedule and forced $\gamma^{(l+1)} \to 0$. However, note that for adaptively chosen $\gamma^{(l)}$, one can no longer guarantee the convergence of the algorithm.

\subsection*{Stopping Criterion}
Proposed iterative reweighted method IR-LP is executed till the objective function in two consecutive iteration is less than $\zeta$ or maximum number of iterations $L$ is exhausted. Now, if $\bs^{(l)}$ and $\bs^{(l+1)}$ are the shrinkage residuals of \eqref{eq:algorithm} in consecutive iterations, $\sum_{i = 1}^N w_i^{(l)} s_i^{(l)} - \sum_{i = 1}^N w_i^{(l)} s_i^{(l+1)} \ge 0$. We terminate the iteration once the difference is less than $\zeta$, \ie,  
\begin{equation}
\begin{aligned}
0 ~~\le~~ & ~~ \sum_{i = 1}^N {s_i^{(l)}}/({s_i^{(l)} + \gamma}) - \sum_{i = 1}^N {s_i^{(l+1)}}({s_i^{(l)} + \gamma}) && \le~~ \zeta~~~ \\ \nonumber
\Rightarrow 0 ~~\le~~ & \sum_{s_i^{(l)} >~ 0} ({s_i^{(l)} - s_i^{(l+1)}})/({s_i^{(l)}/\gamma + 1}) - \sum_{s_i^{(l)} =~ 0} {s_i^{(l+1)}} && \le~~ \gamma \zeta~~~ 
\end{aligned}
\end{equation} 
Thus for a smaller value of $\gamma$, the above constraint enforces a small variability of $\bs^{(l+1)}$. Notice that $\gamma$ is not involved for the inlier residuals in the above expression. Thus, a small number of iteration $L$ is required for a small choice of $\gamma$. However, in practice with the above choice of $\gamma$, the proposed method works quite well with $L = 25$  and $\zeta = 10^{-4}$.

\section{Results}\label{sec:results}

To evaluate the proposed method IR-LP, a number of experiments have been performed on synthetic and real datasets. We compared IR-LP against state-of-the-art approximate methods for MaxCon, namely
\begin{itemize}
\setlength\itemsep{0em}
\item IR-QP: a reweighted least square scheme obtained by replacing each iteration of \eqref{eq:algorithm} by a quadratic program (QP) under linear or quasiconvex residuals (described in the extended version). Note that  there is no closed form solution of each iteration and one needs to solve a convex quadratic program. 
\item Olsson \etal's $\ell_1$ method~\cite{olsson10}; see~\eqref{eq:maxcon4}.
\item {Sim and Hartley's $\ell_\infty$ method~\cite{sim06}, where the $\ell_\infty$ is recursively solved and the data with the largest residuals are removed from the subsequent iterations. }
\item As a baseline, we ran vanilla \texttt{RANSAC} with confidence $\rho = 0.99$~\cite{fischler1981random}. 
\item \texttt{MLEsac} method~\cite{torr2000mlesac}, that adopts similar sampling strategy as \texttt{RANSAC} to instantiate models, but chooses the one that maximizes the likelihood. 
 
\item We also run locally optimize \texttt{LO-RANSAC} \cite{lebeda2012fixing} as a baseline. We only run our own implementation where the parameters were carefully chosen from Table 1 of  \cite{lebeda2012fixing}. The stopping criterion was considered same as vanilla \texttt{RANSAC}. 
\item For the experiments with real data, we also consider \texttt{L-RANSAC} -- allowing vanilla \texttt{RANSAC} to run same amount of time as the proposed method IR-LP. 
\item  We also execute a global method \texttt{ASTAR}~\cite{chincvpr2015}{\footnote{\url{http://pulakpurkait.com/Data/astar_cvpr15_code.zip}}} with maximum allowable runtime $300$ seconds. Note that as the global method is terminated early, it cannot guarantee optimality. 

\end{itemize}
All the methods were implemented in {Matlab} and executed on a $i7$ $CPU$.


\begin{figure} \center 
\subfigure[Average consensus size found.]{\includegraphics[width=0.49\columnwidth]{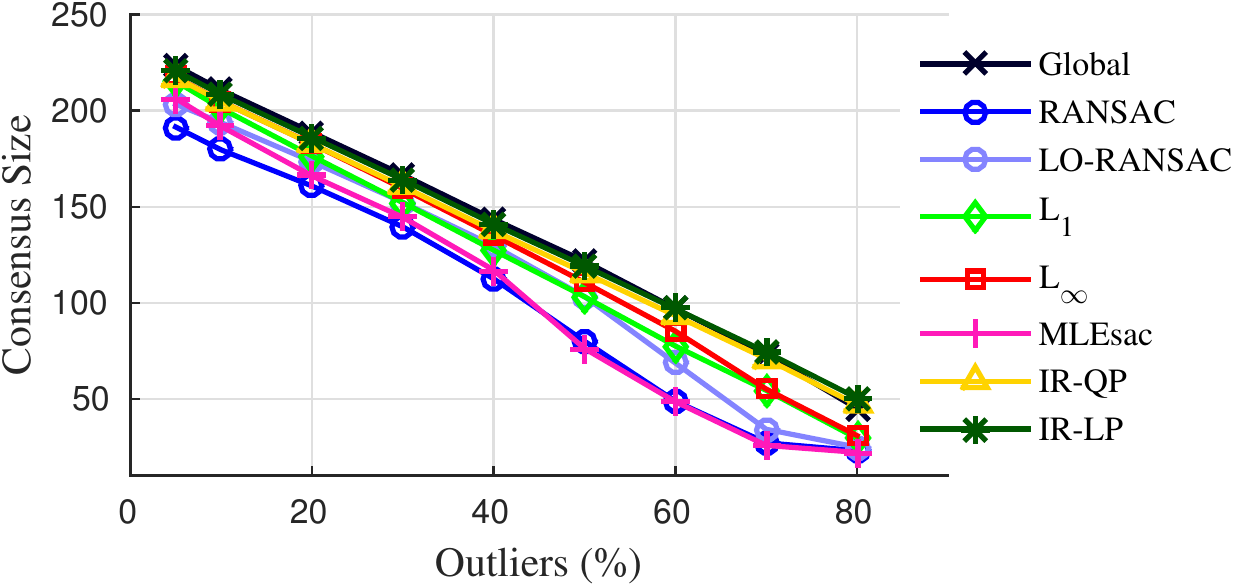}\label{fig:valueline}} \hspace{-0.0cm}
\subfigure[$\log$ of the average run time (seconds).]{\includegraphics[width=0.49\columnwidth]{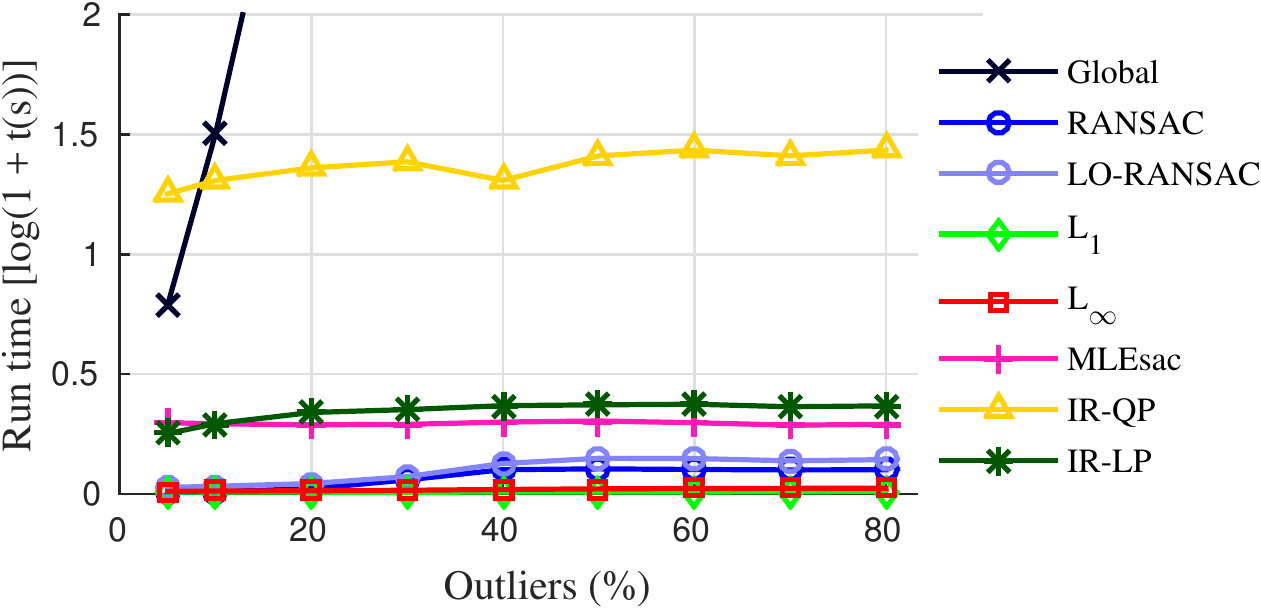}\label{fig:runtime}} \\ 
\subfigure[a zoomed and cropped version of above.]{\includegraphics[width=0.60\columnwidth]{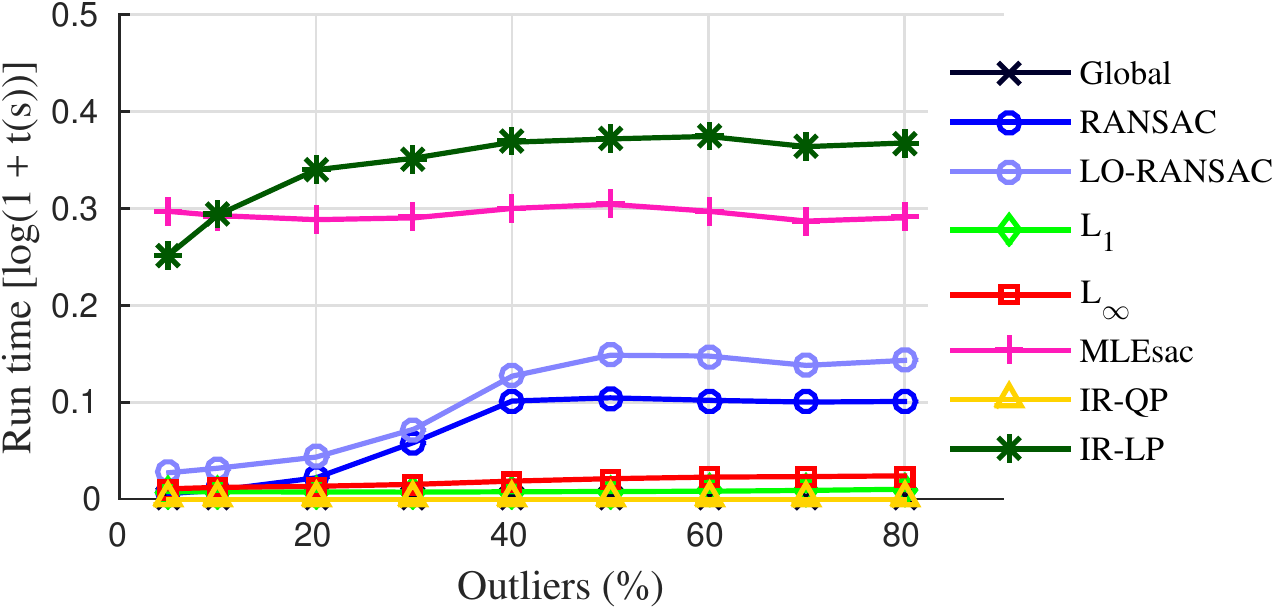}\label{fig:runtime2}} 
\caption{Hyperplane fitting results. Proposed IR-LP clearly dominates the other methods. Please see text for details.}
\label{fig:lineComparison}
\end{figure}

Note that when $r(\btheta;\bx_i)$ is linear, the subproblems (each iterations) of $\ell_1$ and proposed IR-LP are LPs, while for IR-QP the subproblems are QPs. The optimization toolboxes $\ell_1$-magic\footnote{\url{http://statweb.stanford.edu/~candes/l1magic/}}~\cite{candes2005decoding} and \texttt{cvx}\footnote{\url{http://cvxr.com/cvx/}}~\cite{grant2008cvx} are employed to solve the LPs and QPs. 
When $r(\btheta;\bx_i)$ is quasiconvex, the subproblems of all the methods are convex programs~\cite{boyd2004convex}; we solved each convex program instances again with \texttt{cvx}. 


\subsection{Hyperplane fitting} \label{sec:hplane_fitting} 
We generated {$N = 250$} points around an $8$-dimensional hyperplane under Gaussian noise with $\sigma_{in} = 0.1$. A number of the points ($5\%$--$80\%$) were then corrupted by a uniform noise (interval $[-10, 10]$) to simulate outliers. The inlier threshold  was chosen as $\epsilon = 0.3$. For a chosen outlier percentage, we generated $100$ instances of the data and ran the different methods. Figures \ref{fig:valueline} and \ref{fig:runtime} show the average consensus size and run time over the synthesized data. 

While $\ell_1$, $\ell_\infty$ and \texttt{RANSAC} were very fast, they usually produced lower quality results, in terms of the discrepancy with the global solution. While the solution quality of IR-LQ was better to $\ell_1$ and \texttt{RANSAC}, it was much slower, owing to the fact that a QP needs to be solved in each iteration. \texttt{MLEsac} is slower than other randomized method as it has an additional inner loop to estimate the mixing parameter. Further, unlike \texttt{RANSAC}, no probabilistic bounds for number of iterations has been incorporated for \texttt{MLEsac} and executed for $500$ iterations. However, as \texttt{MLEsac} has different criterion (ML) for model estimation, it produces no better solution than other suboptimal methods. \texttt{LO-RANSAC} performs quite well for low outlier ratio. It is clear from the figures that proposed IR-LP was able to produce near optimal solutions in all the cases; in fact, we observed that IR-LP produced optimal solutions in almost $30\%$ of the runs. 
Furthermore, the proposed IR-LP is most effective for the cases with ($50\%$--$70\%$) outlier ratio which are the most common scenarios for the real datasets. 

\newcolumntype{g}{>{\columncolor{gray}}r} 
\begin{sidewaystable}\setlength{\tabcolsep}{3pt}\renewcommand{\arraystretch}{1.3}\setlength{\tabcolsep}{3pt} 
\scriptsize
\begin{center}
  
  \begin{tabular}{rp{1.5cm}r|rg|rg|rg|rg|rg|rg|rg|rg|rg} 
  \hline   
  & \multicolumn{2}{c|}{Methods}  & \multicolumn{2}{c}{\texttt{ASTAR}\cite{chincvpr2015}} & \multicolumn{2}{|c}{\texttt{RANSAC}}  & \multicolumn{2}{|c}{$\ell_1$ \cite{olsson10} } & \multicolumn{2}{|c}{$L_\infty$ \cite{sim06}} & \multicolumn{2}{|c}{\texttt{MLEsac} \cite{torr2000mlesac}} & \multicolumn{2}{|c}{\texttt{LO-RANSAC}\cite{lebeda2012fixing}} & \multicolumn{2}{|c}{IR-QP}   & \multicolumn{2}{|c}{IR-LP} & \multicolumn{2}{|c}{\texttt{L-RANSAC}}\\ \hline \hline 
  &   Datasets    			&   $n$    & $|\cI^*|$   & t (s)  & $|\cI|\pm \sigma$ ~~   & t (s)   & $|\cI|$           & t (s) & $|\cI|$         & t (s) & $|\cI|\pm \sigma$ ~~          & t (s) & $|\cI|\pm \sigma$~ 	 & t (s)     & $|\cI|$  	& t (s)     & $|\cI|$ 		& t (s)   & $|\cI|\pm\sigma$~ & t (s) \\  \hline 
 \multirow{6}{*}{\rotatebox[origin=c]{90}{linear}}   &   Valbonne Ch. & 108      		& {\bf 67}      & 300  & 64.5$~\pm$~~3.1      & 0.02    & 26     		&     0.01     & 37       		& 0.02		& 62.1$~\pm$~~2.4       	& 0.61		& 66.8$~\pm$0.7  &0.06	 & 61 			& 4.71		    & {\bf 67}        	& 0.07 & 64.8$~\pm$0.3		& 0.07 \\ 
   &   University Lib.  & 665         & 552  	&  300     & 542.3$~\pm$12.1 	&  {\bf  0.03}     & 435  		    & 	0.03        & 251		    & 1.30	& 523.4$~\pm$18.7		    & 1.34		&546.8$~\pm$3.8 & 0.27			& 550 			& 5.60		   & {\bf 553}          &  0.21 & 546.2$~\pm$1.8	 &  0.21\\ 
    &  Keble College  		& 399	& {\bf 311} 	& 300  & 305.7$~\pm$~~6.0 	& {\bf 0.03}   & 102 			& 0.02		      & 145 			& 0.10   	& 224.3$~\pm$26.8		& 1.48   	&308.0$~\pm$0.5  &0.13		& 310 			& 2.37		   &  {\bf 311} 		   & 0.07  & 307.5$~\pm$0.9	 & 0.07  \\
     
 & Road Sign  		& 31 	& {\bf 29} 	& 	300 	& 	27.6$~\pm$~~2.1 	& {\bf  0.01}   	&    14 	& 0.01 &  22 			& 0.01 	& 27.3$~\pm$~~0.6 &  0.53 & {\bf 29.0$~\pm$0.0}  & {\bf 0.01} & {\bf 29} & 0.71 & {\bf 29}  & {\bf 0.01} & 28.9$~\pm$0.0 & {\bf 0.01} \\ 

& House  		& 492	& {\bf 355} 	& 300 	& 351.0$~\pm$~~7.8 	& 0.03 	&  261	& {\bf 0.04} &  194	& 0.21  & 	344.1$~\pm$~~9.5	& 1.60	 & 349.1$~\pm$2.0  & 0.12 & 351 & 3.14 & 352 & 0.24 & 353.2$~\pm$2.5 &  0.24 \\ 
  &  Cathedral  		& 544	& {\bf 481} 	& 300    & 464.5$~\pm$13.2  	& {\bf 0.02}   & 445	& 0.03		      & 289		& 0.27    	& 465.7$~\pm$12.5		& 1.84   	& 473.4$~\pm$5.1		& 0.22			& 479  			& 7.03		   & 479		   & 0.36  & 470.5$~\pm$0.8	 & 0.36  \\ \hline  
           &   Datasets    			&   $n$    & $|\cI^*|$   & t (s)  & $|\cI|\pm \sigma$ ~~   & t (s)   & $|\cI|$           & t (s) & $|\cI|$         & t (s) & $|\cI|\pm \sigma$ ~~          & t (s) & $|\cI|\pm \sigma$~ 	 & t (s)     & $|\cI|$  	& t (s)     & $|\cI|$ 		& t (s)   & $|\cI|\pm\sigma$~ & t (s) \\  \hline 
 \multirow{6}{*}{\rotatebox[origin=c]{90}{~~~~quasiconvex~}}   &         Valbonne Ch. & 108  		& 83      & 21.4  & 75.6$~\pm$~~6.1      & 0.02  & 26     		&  0.19     & 60       	& 1.22      & 71.1$~\pm$~~4.7       	& 1.11    & 82.3$~\pm$2.4 & 0.13    & {\bf 83}   		& 6.61    & 83        & 6.23 & 83.9$~\pm$0.1 & 6.23    \\ 
       &   University Lib.  & 665		& 598  	&  300 & 590.5$~\pm$19.9  	& {\bf  0.03}  & 464  		    & 	2.74    & 338		    & 8.42   	 & 529.9$~\pm$16.7	  & 1.93    & 601.9$~\pm$2.3	    & 0.31   		   	& 608 	& 	106.73 & {\bf  613}        &  25.72   & 606.9$~\pm$0.1 &  25.72 \\ 
       &   Keble College  		& 399	 &  309 	& 300  & 306.1$~\pm$~~5.1 	& {\bf 0.02} &  92 		& 	0.64	& 177 		& 2.13    & 301.9$~\pm$~~2.0 		& 	2.07	 & 307.8$~\pm$1.3	  & 0.14  	 & 303		   & 32.87   & 308	  & 7.06 & {\bf 309.8$~\pm$0.4}  &	7.06\\ 
    &   Road Sign  		& 31 	& 29 	& 300 & 28.5$~\pm$~~4.1 			&    {\bf  0.01}	 		& 2   & 0.558 	 &	 23 		& 	0.242  & 28.7$~\pm$~~0.4& 1.39& 28.3$~\pm$0.8 & 0.03 & 30 & 3.92 & {\bf 30}  & 1.63 &30.0$~\pm$0.0 & 1.63 \\ 
      
    &   House  		& 492	& 349 	& 300 & 353.0$~\pm$~~8.2 			& {\bf  0.06} 	 	& 273   & 1.922   	 & 277 		& 	1.278  & 352.8$~\pm$~~0.7 & 3.37 &	353.0$~\pm$~0.0 & 0.12 & 292 & 36.73 & {\bf 355}  & 24.82 & 354.0$~\pm$0.0   & 24.82 \\  
        
     &   Cathedral  		& 544	 & 473 	& 300   	& 463.0$~\pm$14.7 	& {\bf  0.02} 	& 461	& 1.23 	    &	438   & 1.28 &	468.0$~\pm$~~4.9   & 1.72   & 473.7$~\pm$4.7	  & 0.19  & 471 & 28.50 &  {\bf ~~481} & 9.03 & 479.8$~\pm$0.1  & 9.03\\ \hline  \vspace{0.2cm}\\ \hline   & \multicolumn{2}{c|}{Methods}  & \multicolumn{2}{c}{\texttt{ASTAR}\cite{chincvpr2015}} & \multicolumn{2}{|c}{\texttt{RANSAC}}  & \multicolumn{2}{|c}{$\ell_1$ \cite{olsson10}} & \multicolumn{2}{|c}{$L_\infty$ \cite{sim06}} & \multicolumn{2}{|c}{\texttt{MLEsac} \cite{torr2000mlesac}} & \multicolumn{2}{|c}{\texttt{LO-RANSAC}\cite{lebeda2012fixing}} & \multicolumn{2}{|c}{IR-QP}   & \multicolumn{2}{|c}{IR-LP} & \multicolumn{2}{|c}{\texttt{L-RANSAC}}\\ \hline \hline 
    
     \multirow{7}{*}{\rotatebox[origin=c]{90}{linear}} &   Datasets    				&   n  	& $|\cI^*|$  	& t (s)  & $|\cI|\pm \sigma$~~~    		& t (s)  & $|\cI|$    		& t (s)   	& $|\cI|$    		& t (s)   	& $|\cI|\pm \sigma$~~~           & t (s)  	& $|\cI|\pm \sigma$~~~  	& t (s)     & $|\cI|$  	& t (s)     & $|\cI|$  		& t (s) & $|\cI|\pm\sigma$~~~ & t (s) \\ \hline 
 	&    Valbonne Ch. 			& 108  	 & {\bf 88} 	& 300	& 77.1$~\pm$~~2.8 			& {\bf  0.06}   & 17			&	0.01      & 65   		&  0.02  & 73.9$~\pm$~~4.1   		&  0.78 		 & 80.6$~\pm$~~2.5   		&  0.14 			 & 78   		&  1.98			& 85 			& 0.18 & 80.1$~\pm$~~1.7  & 0.18 \\ 
 	&    Wadham Cl.  			& 1051	& {\bf 365}  	& 300       & 287.8$~\pm$18.1  	& {\bf  0.18}  & 129  	    & 	0.02  	& 213		    & 0.09   	   	& 242.5$~\pm$29.5 		    &    1.15 			& 317.0$~\pm$21.3 	    &    0.22 			& 312	& 13.19 		   & 344        &  0.38	&	307.8$~\pm$12.9 &  0.38	 \\ 
 	 & M. College I  			& 577		& {\bf 234} 	& 300 	& 212.6$~\pm$~~3.6 		& {\bf 0.09}  	   	&   79 		&  0.01		     	& 58  		& 0.03 & 197.2$~\pm$~~4.9 &	0.91 &  212.8$~\pm$~~1.2		&    0.16		& 211& 10.83	& 207 	& 0.13 &  216.1$~\pm$18.1 	& 0.18 	\\ 
	&     Merton Cl. III  			& 313		& {\bf 214} 		& 300	& 176.8$~\pm$~~6.8		& {\bf 0.06}   & 44 		&  0.01		   		   	& 174  		& 0.01   	&  155.4$~\pm$~~7.2		&    0.59		&  184.9$~\pm$~~7.0 	&    0.08		& 197  		& 	7.23	& 210 		& 0.24  & 189.3$~\pm$~~2.6		&  0.24	\\ 
  	&   Corridor  				& 124		& {\bf 72} 		& 300		& 55.3$~\pm$~~2.5 		& {\bf  0.11}   & 13 		&  0.01		    & 55  		& 0.01  		& 	41.9$~\pm$~~3.9		&   	0.32		& 	59.4$~\pm$~~2.8 	&   	0.12		& 56  		& 5.39		& 67 		& 0.16 & 57.6$~\pm$~~0.8 	& 0.16 \\  
  	& Dinosaur  		& 156		& {\bf 94} 		& 300 	& 68.6$~\pm$~~5.3 		&  0.08   	&  85 	   &  0.11	    &  	23	        &  0.02 	& 33.8$~\pm$~~3.7 			&  0.04   				& 70.0$~\pm$~~3.9 	& 0.01 	& 92 & 	0.04 & 	78		&   	{\bf 0.07}	 & 82.3$~\pm$~~2.7 	&  0.09  \\ \hline   

\end{tabular}
\end{center}
\caption{First two blocks: Results for homography estimation with linear and quasiconvex residuals. The last block: Results for linearised fundamental matrix estimation. $n$: number of point correspondences, $|\cI|$: consensus set size (average for the randomized methods), $\sigma$: $std$ of the consensus set size, $|\cI^*|$: optimal consensus set size, t(s): runtime in seconds. The columns corresponding to the runtime are marked by gray, and the best values (the maximum size consensus set and the runtime) are marked with bold fonts. }
\label{tab:linerHomography}  
\end{sidewaystable}

\subsection{Homography fitting}\label{res:homo}
In this experiment, we used images from the Oxford Visual Geometry Group\footnote{\url{http://www.robots.ox.ac.uk/~vgg/data/}}, namely, Valbonne Church (image index 4 and 7), University Library (image index 1 and 2), and Keble College (image index 2 and 3). These images have been used extensively in previous works on geometric estimation. On each image pair, SIFT key-points were detected and matched using the \texttt{VLFeat} toolbox\footnote{\url{http://www.vlfeat.org}}, where the second nearest neighbour test was invoked to prune wrong matches. We used the default parameters in \texttt{VLFeat}.
Both linearised residuals and geometric (quasiconvex) residuals are considered for homography estimation, which involves estimation of an $8$D parameter vector $\btheta$. 

\paragraph{ Linearised residuals}
The reader is referred to \cite[Section~4.1.2]{hartley2003multiple} on linearising the residuals for homography estimation. Each point-sets  were normalized separately by translating to mean $=0$ and scaling to $std =\sqrt{2}$. The inlier threshold $\epsilon$ was chosen as $\epsilon = 0.1$. Table~\ref{tab:linerHomography} presents the results of all methods. For \texttt{RANSAC} and other randomized methods, the results were averaged over $100$ runs. While $\ell_1$ was very fast, its solution quality was very poor --- this was most likely because the distribution of outliers in real data is not balanced, unlike in synthetic data where the outliers were considered to be uniformly distributed. It can also be seen that IR-QP is much slower than the other methods. We executed an efficient implementation of \texttt{LO-RANSAC}, but we believe, it has similar runtime complexity as \texttt{RANSAC}. In contrast, IR-LP always  produces larger size consensus set, and while its runtime was longer than \texttt{RANSAC}, \texttt{LO-RANSAC} and $\ell_1$, it was much faster than IR-QP. This proves overall better performance for IR-LP. 

\paragraph{ Quasiconvex residuals}\label{sec:quasi}
Model estimation under quasiconvex residuals is more geometrically meaningful, and inlier thresholds can be quoted in geometric units (pixels). The reader is referred to~\cite{kahl08} for the precise formulation of quasiconvex residuals for homography estimation.

Results under the inlier threshold $\epsilon = 1$ pixels are shown in Table \ref{tab:linerHomography}. On average proposed IR-LP managed to return the approximate solution that is better than the other methods. Both IR-QP and IR-LP were able to significantly improved upon the other methods, and the final solution quality of IR-QP/IR-LP were much higher than iterative $\ell_1$ and $\ell_\infty$. Under quasiconvex residuals, IR-LP is equally expensive as IR-QP due to the requirement of solving convex programs.

\subsection{Fundamental matrix estimation} 
We repeat the previous experiment, for linearised fundamental matrix estimation, on the same set of image pairs. Refer to \cite[Section~9.2.3]{hartley2003multiple} for the precise procedure in linearising the residual for fundamental matrix estimation. The normalizations of the individual point-sets were also performed here. $\btheta$ is also $8$-dimensional and inlier threshold $\epsilon$ was  chosen to be $0.1$. To test the optimum performance of all methods, we did not enforce the rank-$2$ constraint on the resulting fundamental matrices in all the methods. 

{We observe that a simple choice of the  initialization $\bs^{(0)} = \mathbf{1}$ does not  lead to a satisfactory local solution for this experiment. Here we initialize $\btheta$ by the solution of the iterative $\ell_\infty$ algorithm~\cite{sim06}} $\btheta_\infty$. The shrinkage residuals $\bs^{(0)}$ for all the points are then computed by evaluating residuals at $\btheta_\infty$. The \texttt{RANSAC} solution could also be another choice for initialization. However, iterative $\ell_\infty$ was chosen purely on computational basis. The results of different methods are shown in Table~\ref{tab:linerHomography}. The runtime for the iterative $\ell_\infty$ is added with the runtime of IR-LP and IR-QP. As the  iterative $\ell_\infty$ method is very fast, its local refinement  by proposed method is an attractive choice for fundamental matrix estimation. 

\section{Conclusions}
In this work, we formulated the maximization of the size of a consensus set as the iterative minimization of the re-weighted $\ell_1$ norm of the shrinkage residuals. Then, we illustrated different smooth surrogates of MaxCon. Followed by the minimization of a smooth surrogate that led to an iterative reweighted algorithm IR-LP. A convergent analysis and the runtime complexity of IR-LP are also discussed. 
Furthermore, a number of reweighted methods is  derived for this task and compared with the proposed method. Experimental results show the efficiency of the proposed method compared to the existing approximate methods.  Finally, we would like to draw an attention to the fact  that, in the linear residual case, each iteration of our algorithm simply requires solving a single LP, and thus the method can be implemented very easily using the existing optimization tools. Thus, our method can surely be used as a replacement of the randomized methods. 

 \newpage
\noindent {\Large \bf Supplementary Material: } 

\section{MaxCon - minimizing diversity of residuals}
In this section, we derive the connection between the Maximum Consensus problem and the Majorization-Minimization (MM) algorithm\footnote{As the current section  address some insights of the proposed method, only interested readers are encouraged to go through this section, others are redirected to the results section~\ref{sec:results} for more results}. 
The MaxCon can be written as follows:
\begin{equation} \label{equ:maxcon3}
\begin{aligned} 
\min_{\btheta,\; \bs} &
~~~ \sum_{i=1}^n \mathbf{1}(s_i ), &\\ 
~~\text{subject to} & 
~~~ r(\btheta; \bx_i) \leq \epsilon + s_i, & s_i \ge 0,\\ 
\end{aligned} 
\end{equation}
where $ \mathbf{1}(s_i)$ is an indicator function that returns $1$ if $s_i$ is \emph{non-zero}. 
The convex relaxation to~\eqref{equ:maxcon3} is the minimization of  absolute sum of the shrinkage residuals
\begin{align}\label{equ:maxcon4} 
\begin{aligned}
& ~~~~~~~~\min_{\btheta,\; \bs}
& &  \sum_{i=1}^n s_i &\\
& \text{subject to}
& & r(\btheta; \bx_i) \leq \epsilon + s_i, & s_i \ge 0, \\
\end{aligned}
\end{align}
which is also a robust estimation of the model parameters $\btheta$. 

\subsection{Minimizing diversity of residuals} \label{sec:iterirl1}

The difference between the objective of \eqref{equ:maxcon3} and \eqref{equ:maxcon4} is in how they ``count" the coefficients $\bs$,  affects the magnitude of the optimized $\bs$. Specifically, the larger coefficients are penalized more heavily in \eqref{equ:maxcon4} than smaller coefficients, unlike in \eqref{equ:maxcon3} where positive magnitudes are penalized equally. Intuitively, therefore, in the solution of \eqref{equ:maxcon3}, the shrinkage residuals will be less diverse (more concentrated) than the shrinkage residuals in the solution of \eqref{equ:maxcon4}. 

We demonstrate this observation in Figure \ref{fig:majorizationplot}, where we consider a line fitting problem. The solutions of Maxcon and a suboptimal solution are plotted along with the histogram of the optimized shrinkage residuals. Clearly the shrinkage residuals corresponding to the MaxCon solution are less diverse. This motivated us to seek a representation that aims to minimize diversity among the shrinkage residuals, with the ambition that it would lead to the MaxCon solution \eqref{equ:maxcon3}.

\begin{figure}[t]\centering
\subfigure[]{ \includegraphics[width=0.42\columnwidth]{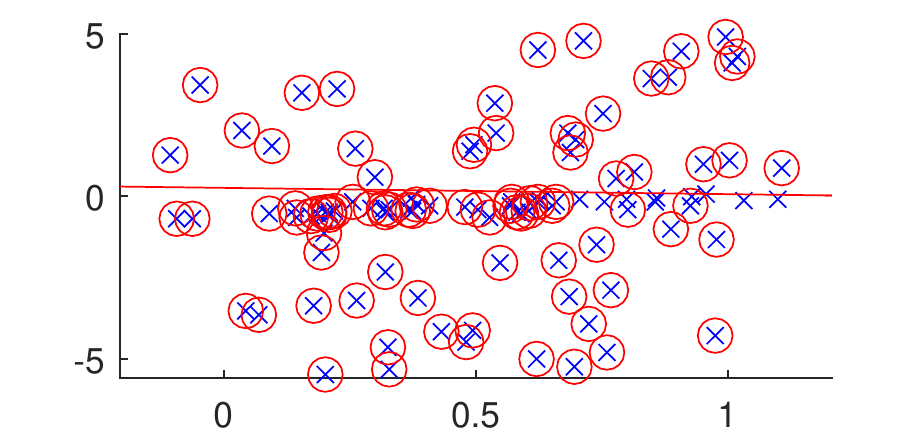}\label{fig:majorizationplot1}}
\subfigure[]{\includegraphics[width=0.42\columnwidth]{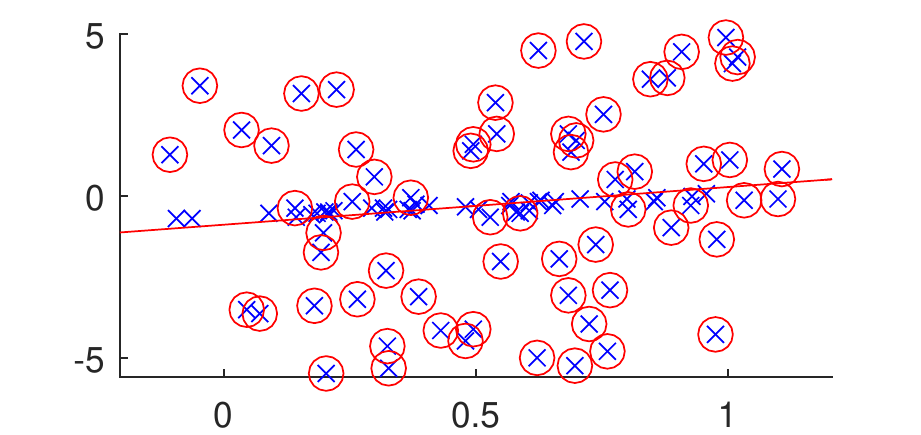}\label{fig:majorizationplot2}} \\ 
\subfigure[]{\includegraphics[width=0.42\columnwidth]{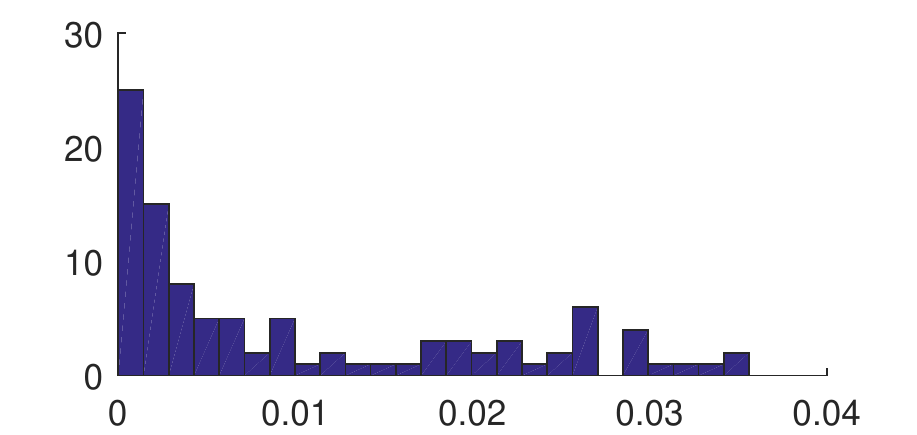}\label{fig:majorizationplot3}}
\subfigure[]{\includegraphics[width=0.42\columnwidth]{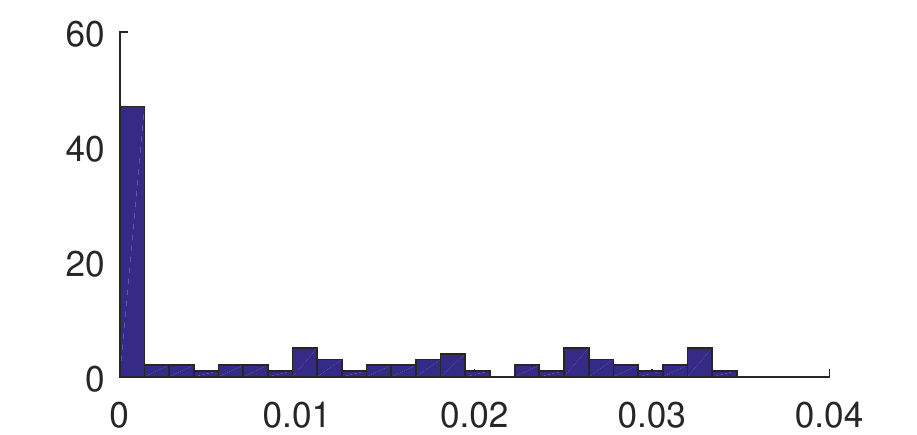}\label{fig:majorizationplot4}}
\caption{Line fitting problem: (a) and (b) are the MaxCon solution and a suboptimal solution. Circled data are those with non-zero shrinkage residuals, i.e., the identified outliers; (c) and (d) are the corresponding histograms of the non-zero shrinkage residuals.}
\label{fig:majorizationplot}
\end{figure}


\subsection{Majorization and Schur-concavity}

We follow the same notations and symbols used in \cite{marshall2010inequalities} to develop the background on majorization. 
\begin{definition}\label{dfn:majorization}
A preordering $\prec$ on the non-negative orthant $\mathbb{R}^n_+$ is defined for $\bs, \bt \in \mathbb{R}^n_+ \subset \mathbb{R}^n$ by 
\begin{align*}
\bs \prec \bt & \text{ if} & \begin{cases} \sum_{i = 1}^{k} s_{\lfloor i \rfloor} \le \sum_{i = 1}^{k} t_{\lfloor i \rfloor}, k = 1, \ldots, n - 1\\ \sum_{i = 1}^{n} s_{\lfloor i \rfloor} = \sum_{i = 1}^{n} t_{\lfloor i \rfloor} \end{cases}
\end{align*}
where $s_{\lfloor i \rfloor}$ and $t_{\lfloor i \rfloor}$ denotes the non-increasing arrangements\footnote{i.e., an arrangements of elements of the vector $\bs$, so that $s_{\lfloor 1 \rfloor} \ge s_{\lfloor 2 \rfloor} \ge s_{\lfloor 2 \rfloor} \ldots \ge s_{\lfloor n \rfloor}$} of the elements of $\bs$ and $\bt$. We say $\bs$ is {\bf majorized} by $\bt$ if $\bs \prec \bt$. 
\end{definition}

When $\bs \prec \bt$, $\bs$ is more diverse than $\bt$ or, equivalently $\bt$ is more concentrated than $\bs$. Let us denote the sequence of partial sums by $S_s \lfloor k \rfloor$, i.e., $S_s \lfloor k \rfloor = \sum_{i = 1}^{k} s_{\lfloor i \rfloor}$. Then the majorization order can also be rewritten as 
\begin{align*}
\bs \prec \bt & \text{ if } & \begin{cases}  S_s \lfloor k \rfloor \le S_t \lfloor k \rfloor, k = 1, \ldots, n - 1 \\ S_s \lfloor n \rfloor = S_t \lfloor n \rfloor \end{cases} 
\end{align*}
The \emph{Lorentz curve} is a plot of $S_s \lfloor k \rfloor$ against $k$. Clearly, if the Lorentz curve of $S_s \lfloor k \rfloor$ lies under the Lorentz curve of $S_t \lfloor k \rfloor$ everywhere, then $\bs \prec \bt$. Two vectors cannot be related by the majorization if the corresponding Lorentz curves intersect. In Fig. \ref{fig:lorentz}, we demonstrate the  properties of Lorentz curves.  

It can be easily proved that the preorder $\prec$ defined above is also a partial order relation. \ie $\prec$ is not only reflexive and transitive but also antisymmetric.\\

\begin{figure}\centering
\includegraphics[width=0.5\columnwidth]{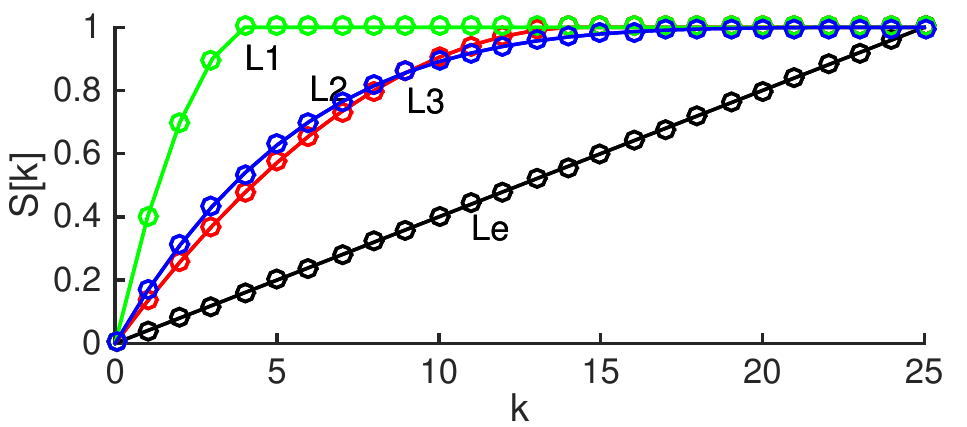}
\caption{Different Lorentz Curves  for a population of size $n = 25$. The curve corresponds to maximum diversity is $Le$. $L1, L2$ and $L3$ curves correspond to the vectors $\bs_1, \bs_2$ and $\bs_3$ where $\bs_1 \prec \bs_2$ and $\bs_1 \prec \bs_3$, i.e.,  $\bs_1$ represents the minimum diversity among them. Since the curves $L2$ and $ L3$ intersect, the corresponding vectors $\bs_2$ and $\bs_3$ cannot be ordered by majorization.}
\label{fig:lorentz}
\end{figure}

We prove the following theorem which relates the key ideas of the current work. 
\begin{theorem}\label{th:mainth} 
Let $\bs $ and $ \bt$ be the shrinkage residuals corresponding to two different solutions of \eqref{equ:maxcon3}. If $\bs \prec \bt$, then the number of inliers of the solution corresponding to $\bt$ is greater than or equal to the number of inliers of the solution corresponding to $\bs$. The converse is not generally true. 
\end{theorem}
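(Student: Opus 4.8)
The plan is to translate the hypothesis $\bs \prec \bt$ into a statement about where the two Lorentz (partial-sum) curves saturate, and to read the inlier counts directly off those saturation points. Since both $\bs,\bt \in \mathbb{R}^n_+$, the inliers of a solution are precisely the indices with a vanishing shrinkage residual, so the number of outliers equals the number of strictly positive components, which I denote $N_+(\cdot)$; the inlier count is then $n - N_+(\cdot)$. The key elementary fact I would establish is a characterization of $N_+$ through the partial sums of the non-increasing rearrangement: for a nonnegative vector $\bu$ the sequence $S_u\lfloor k\rfloor$ is non-decreasing, strictly increases exactly while a positive entry is being added, and becomes constant and equal to the total $S_u\lfloor n\rfloor$ precisely once all positive entries have been summed. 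Hence
\begin{equation}
N_+(\bu) = \min\{\, k : S_u\lfloor k\rfloor = S_u\lfloor n\rfloor \,\}, \nonumber
\end{equation}
which reduces the claim to comparing the smallest indices at which the two curves reach their common final height.

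With this in hand the main argument is short. By Definition~\ref{dfn:majorization}, $\bs \prec \bt$ gives equal totals $S_s\lfloor n\rfloor = S_t\lfloor n\rfloor =: T$ together with $S_s\lfloor k\rfloor \le S_t\lfloor k\rfloor$ for all $k \le n-1$. Put $p := N_+(\bt)$. The characterization yields $S_t\lfloor k\rfloor < T$ for every $k < p$, and the majorization inequality then forces $S_s\lfloor k\rfloor \le S_t\lfloor k\rfloor < T$ for all such $k$ (these indices satisfy $k \le p-1 \le n-1$, so the inequality is available). Consequently the partial sums of $\bs$ cannot attain $T$ before index $p$, i.e. $N_+(\bs) \ge p = N_+(\bt)$. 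Thus $\bt$ has no more outliers than $\bs$, and its inlier count $n - N_+(\bt)$ is at least $n - N_+(\bs)$, which is exactly the assertion.

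For the converse I would exploit that $\prec$ is far more restrictive than an inlier inequality, because it demands equal $\ell_1$ mass $\sum_i s_i = \sum_i t_i$; two distinct solutions generically carry different mass and are therefore automatically $\prec$-incomparable. A toy illustration is $\bt = (c,0,0)$ and $\bs = (a,a,0)$ with $a,c>0$ and $c \neq 2a$: here $\bt$ has two inliers against one for $\bs$, yet the unequal totals preclude $\bs \prec \bt$. Even when the totals do match, intersecting Lorentz curves (as in Fig.~\ref{fig:lorentz}) produce incomparable vectors, so the inlier ordering alone can never recover $\prec$.

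I expect the only delicate point to be the partial-sum characterization of $N_+$ and its careful use near the endpoint $k=n$ (handling $p=n$ and the switch from the range $k\le n-1$ to the total at $k=n$); everything else is mechanical bookkeeping from the definition of majorization. It is worth noting that the argument never invokes optimality of the solutions --- it is a pure fact about nonnegative vectors and their diversity order --- which is precisely why the converse is bound to fail.
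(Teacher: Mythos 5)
Your proof is correct and follows essentially the same route as the paper's: both arguments reduce the inlier comparison to the majorization partial-sum inequalities plus the equal-totals condition, using the fact that the sorted partial sums of a nonnegative vector saturate exactly at the index equal to its number of positive entries. The differences are presentational only --- you argue directly via the saturation-index characterization where the paper argues by contradiction using tail sums (after normalizing the totals), and you justify the failure of the converse with an explicit unequal-mass counterexample where the paper appeals to intersecting Lorentz curves.
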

\begin{proof}
Let us assume that the solution corresponding to $\bs$ contains more inliers than the solution corresponding to $\bt$ while $\bs \prec \bt$, i.e., 
\begin{equation}\label{equ:thm1}
 \sum_{i=1}^n \mathbf{1}(s_i = 0) > \sum_{i=1}^n \mathbf{1}(t_i = 0)
\end{equation}
 as inliers corresponds to coefficients $s_i = 0$. Let $ c = \sum_{i=1}^n \mathbf{1}(s_i = 0) $ then the above implies 
\begin{align}\label{equ:thm2}
 0 = \sum_{i=n-c+1}^{n} s_{\lfloor i \rfloor} < \sum_{i=n-c+1}^{n} t_{\lfloor i \rfloor}. 
\end{align}
 Further assume that $S_s $ and $S_t $ are normalized into sum to one. i.e. 
\begin{equation}\label{equ:thm3}
1 = \sum_{i=1}^{n} s_{\lfloor i \rfloor} = \sum_{i=1}^{n} t_{\lfloor i \rfloor}.
\end{equation} 
 Then subtracting \eqref{equ:thm2} from \eqref{equ:thm3}, 
 \begin{align}
 1 = \sum_{i=1}^{n-c} s_{\lfloor i \rfloor} > \sum_{i=1}^{n-c} t_{\lfloor i \rfloor},
 \end{align}
which contradicts the Definition \ref{dfn:majorization} for $\bs \prec \bt$. 

Conversely, for the case when the solution corresponding to $\bt$ contains more inliers than the solution corresponding to $\bs$ and the respective Lorentz curves $Lt~\&~Ls$ intersects, then $\bs~\&~\bt$ are not related by majorization order. \qed
\end{proof}

The theorem above effectively says that the MaxCon solution is the least diverse among all possible set of residuals that related by the partial order $\prec$. \ie, if we could minimize the diversity over the constraints in \eqref{equ:maxcon3}, hopefully, we end up with the MaxCon solution. 


\begin{definition}
A function $\phi:\mathbb{R}^n_+ \rightarrow \mathbb{R} $ is said to be {\bf Schur-concave} if $\phi (\bs) \ge \phi (\bt)$ whenever $\bs \prec \bt$ and strictly Schur-concave if in addition $\phi(\bs) > \phi(\bt)$ when $\bs$ is not a permutation of $\bt$. 
\end{definition}

\begin{theorem}\label{th:drvtest}
Let $I \subset \mathbb{R}$ be an open interval and let the function $\phi: I^n \rightarrow \mathbb{R}^n $ be continuously differentiable. Then $\phi $ is Schur-Concave on $I^n$ if it is permutation symmetric (i.e. $\phi(\bs) = \phi(P \bs)$ for any permutation matrix $P$) and satisfies Schur's condition
\begin{equation}\label{equ:schurscondition}
(s_i - s_j)\left(\frac{\partial \phi (\bs)}{\partial s_i}  - \frac{\partial \phi (\bs)}{\partial s_j}\right) \le 0, \forall i, j = 1, \ldots, N. 
\end{equation}
Furthermore, as $\phi (x)$ is assumed to be permutation symmetric, the above would be true if it holds for a single pair $(i, j)$ of specific values. See~\cite{marshall2010inequalities} for the proof.
\end{theorem}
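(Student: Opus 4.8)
The plan is to prove the Schur--Ostrowski criterion by reducing the global majorization inequality to an infinitesimal, two-coordinate computation. The structural tool I would invoke is the Hardy--Littlewood--P\'olya characterization of majorization: $\bs \prec \bt$ holds if and only if $\bs$ can be obtained from $\bt$ by a finite sequence of \emph{T-transforms} (Robin Hood operations), each of which replaces a pair of coordinates $(t_i,t_j)$ with $t_i \ge t_j$ by a convex combination that moves them closer together, while leaving all remaining coordinates and the total sum unchanged. Since $\phi(\bs) \ge \phi(\bt)$ would then follow by chaining the inequality across each individual T-transform, it suffices to show that a single equalizing T-transform does not decrease the value of $\phi$.

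First I would fix two indices $i,j$ and, using permutation symmetry to relabel, assume $s_i \ge s_j$. I would define the one-parameter equalization path
\begin{equation}
s_i(\alpha) = s_i - \alpha, \quad s_j(\alpha) = s_j + \alpha, \quad s_k(\alpha) = s_k \ \ (k \neq i,j),
\end{equation}
for $\alpha \in [0,(s_i-s_j)/2]$. Because $I$ is an interval, each of $s_i(\alpha),s_j(\alpha)$ stays a convex combination of $s_i,s_j$ and hence remains in $I$, so $\bs(\alpha) \in I^n$ throughout. Differentiating and using the chain rule gives
\begin{equation}
\frac{d}{d\alpha}\,\phi(\bs(\alpha)) = -\Big(\frac{\partial \phi}{\partial s_i}(\bs(\alpha)) - \frac{\partial \phi}{\partial s_j}(\bs(\alpha))\Big).
\end{equation}
Along the path $s_i(\alpha)-s_j(\alpha) = (s_i-s_j) - 2\alpha \ge 0$, so Schur's condition \eqref{equ:schurscondition} forces $\partial_i\phi - \partial_j\phi \le 0$, whence the derivative above is nonnegative. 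Integrating in $\alpha$ shows $\phi$ is non-decreasing along the equalization, i.e. the more concentrated vector carries the smaller $\phi$-value.

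Then I would assemble the global statement: given $\bs \prec \bt$, I write $\bs$ as the image of $\bt$ under finitely many such T-transforms; each step is an equalization of a pair of (sorted) coordinates, to which the previous computation applies after a permutation that leaves $\phi$ invariant. Telescoping these inequalities yields $\phi(\bs) \ge \phi(\bt)$, which is precisely Schur-concavity. The claimed reduction to a single pair $(i,j)$ follows because permutation symmetry carries the map $\bs \mapsto \partial_i\phi(\bs)-\partial_j\phi(\bs)$ over to every other pair by relabeling coordinates, so verifying \eqref{equ:schurscondition} once is enough.

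The part requiring the most care is the Hardy--Littlewood--P\'olya reduction to T-transforms: establishing that any $\bs \prec \bt$ is reachable by finitely many sum-preserving equalizations, with every intermediate vector remaining in $I^n$, is the real content, whereas the derivative computation is routine once the path is set up. Since the statement cites~\cite{marshall2010inequalities}, I would either import this T-transform lemma as a black box or, for a self-contained argument, prove it by induction on the number of coordinates at which $\bs$ and $\bt$ differ, each inductive step removing one discrepancy by a single transfer.
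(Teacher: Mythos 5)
The paper never proves this theorem: it is quoted as the known Schur--Ostrowski sufficiency criterion, with the proof deferred wholesale to \cite{marshall2010inequalities}, so there is no internal argument to compare yours against. Judged on its own, your proof outline is correct, and it is in essence the classical argument found in that reference. Both ingredients are right: (i) the Hardy--Littlewood--P\'olya lemma that $\bs \prec \bt$ means $\bs$ is reachable from $\bt$ by finitely many sum-preserving Robin Hood transfers, with intermediate points staying in $I^n$ because each transfer forms convex combinations of two existing coordinates; and (ii) the computation that along the equalizing path $s_i(\alpha)=s_i-\alpha$, $s_j(\alpha)=s_j+\alpha$ (with $s_i\ge s_j$), the chain rule plus condition \eqref{equ:schurscondition} gives $\frac{d}{d\alpha}\phi(\bs(\alpha)) = -\left(\partial\phi/\partial s_i-\partial\phi/\partial s_j\right)\ge 0$, so each transfer can only increase $\phi$; chaining the transfers from $\bt$ to $\bs$ yields $\phi(\bs)\ge\phi(\bt)$, exactly the paper's definition of Schur-concavity. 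Your reduction of \eqref{equ:schurscondition} to a single index pair via permutation symmetry is also the standard argument. Two minor points to tighten: at the instant of complete equalization, $\alpha=(s_i-s_j)/2$, the factor $s_i(\alpha)-s_j(\alpha)$ vanishes and \eqref{equ:schurscondition} gives no sign information, so you should argue monotonicity on the half-open interval and pass to the closed one by continuity; and you rightly identify the T-transform decomposition as the substantive lemma --- if you import it from \cite{marshall2010inequalities} rather than proving it, your write-up and the paper's bare citation differ mainly in that yours makes the derivative step explicit.
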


Schur-concavity is well-known necessary condition for a function $\phi$ to be a good measure of diversity~\cite{marshall2010inequalities}. This class of functions maintain the preordering in reverse order. Thus a reasonable approach to maximizing the size of the inlier set is to minimize the diversity of shrinkage residuals (Theorem~\ref{th:mainth}), as measured by a Schur-concave function $\phi$.

\subsection{Diversity measures}

As motivated in the previous section, our task is to find a suitable choice of a Schur-concave function and minimize the corresponding objective function with the constraints in \eqref{equ:maxcon3}. We consider the Gaussian entropy measure and signomial diversity measure~\cite{kreutz1997general}.

\begin{definition}
The Gaussian entropy measure of diversity is
\begin{equation}\label{eq:dvrsty}
G_\gamma (\bs) = \sum_{i=1}^n \log (s_i + \gamma), ~~~~~~~ \gamma > 0. 
\end{equation}
\end{definition}
The Gaussian entropy~\cite{rao1999affine, kreutz1997general} has been studied for $\gamma = 0$. We introduce a small positive number $\gamma$ to ensure that the measure is bounded from below. This damping factor can also be observed as the  regularization of the optimization~\cite{chartrand2008iteratively}. In the following, we further prove that $G_\gamma$ satisfies Schur's condition~\eqref{equ:schurscondition}.
\begin{theorem}
$G_\gamma$ is strictly Schur-concave on the non-negative orthant $\mathbb{R}^n_+$. 
\end{theorem}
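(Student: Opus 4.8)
The plan is to verify the two hypotheses of Theorem~\ref{th:drvtest} (permutation symmetry and Schur's condition~\eqref{equ:schurscondition}) for $G_\gamma$, and upgrade the conclusion to \emph{strict} Schur-concavity. First I would observe that $G_\gamma(\bs) = \sum_{i=1}^n \log(s_i + \gamma)$ is manifestly permutation symmetric, since it is a sum over all coordinates and reordering the summands leaves the value unchanged; equivalently $G_\gamma(P\bs) = G_\gamma(\bs)$ for any permutation matrix $P$. This handles the first hypothesis immediately.

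Next I would compute the partial derivatives. We have $\partial G_\gamma(\bs)/\partial s_i = 1/(s_i + \gamma)$, so for a single pair $(i,j)$ the quantity appearing in Schur's condition is
\begin{equation}
(s_i - s_j)\left(\frac{1}{s_i + \gamma} - \frac{1}{s_j + \gamma}\right). \nonumber
\end{equation}
Combining the two fractions over the common denominator $(s_i+\gamma)(s_j+\gamma)$ gives $1/(s_i+\gamma) - 1/(s_j+\gamma) = (s_j - s_i)/[(s_i+\gamma)(s_j+\gamma)]$, so the product becomes
\begin{equation}
-\frac{(s_i - s_j)^2}{(s_i+\gamma)(s_j+\gamma)}. \nonumber
\end{equation}
Since $s_i, s_j \ge 0$ and $\gamma > 0$, both factors in the denominator are strictly positive, so the whole expression is $\le 0$, with equality exactly when $s_i = s_j$. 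By the final remark of Theorem~\ref{th:drvtest}, verifying this for a single pair suffices because $G_\gamma$ is permutation symmetric, so Schur's condition holds and $G_\gamma$ is Schur-concave on $\mathbb{R}^n_+$.

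To obtain the \emph{strict} conclusion I would argue that the inequality above is strict whenever $\bs$ is not a permutation of $\bt$ (equivalently, whenever two coordinates differ). The strictness of the per-pair inequality $-(s_i-s_j)^2/[(s_i+\gamma)(s_j+\gamma)] < 0$ for $s_i \neq s_j$ feeds into the standard derivative-based proof of Schur-concavity, where one integrates the inner product of $\nabla G_\gamma$ with a path of T-transforms moving $\bt$ toward $\bs$; as long as the two vectors are not related by permutation, at least one transform strictly decreases some coordinate gap and the accumulated inequality is strict, yielding $G_\gamma(\bs) > G_\gamma(\bt)$. The main obstacle, such as it is, is not the computation (which is routine) but stating the strictness argument cleanly: one must be careful that equality in the single-pair test corresponds precisely to $\bs$ being a permutation of $\bt$, so I would lean on the characterization in~\cite{marshall2010inequalities} of when the integrated Schur inequality degenerates, rather than reproving it, to keep the argument short.
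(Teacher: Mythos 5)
Your proof is correct and follows essentially the same route as the paper's: verifying permutation symmetry of $G_\gamma$ and then checking Schur's condition~\eqref{equ:schurscondition} via the derivative test of Theorem~\ref{th:drvtest}, arriving at the identical expression $-(s_i-s_j)^2/[(s_i+\gamma)(s_j+\gamma)] \le 0$. Your handling of the strictness upgrade (via T-transforms and the characterization in~\cite{marshall2010inequalities}) is actually more careful than the paper's, which merely remarks that the inequality is strict when $s_i \neq s_j$, but this is a refinement of the same argument rather than a different approach.
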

\begin{proof}
Let $P$ be a permutation matrix defined on a scalar vector $\bs \in  \mathbb{R}^n_+$. Then, $G_\gamma (P\bs) = \sum_{i=1}^n \log (p_i \bs + \gamma) = \sum_{i=1}^n \log (s_i + \gamma) = G_\gamma (\bs)$, where $p_i$ is the $i^{th}$ row of $P$. Hence $G_\gamma$ is permutation symmetric.  

For any pair of components $(i, j)$ of  a vector $\bs \in \mathbb{R}^n_+$ and for $\gamma > 0$, 
\begin{equation}
\begin{aligned}
& (s_i - s_j)\left(\frac{\partial G_\gamma (\bs)}{\partial s_i}  - \frac{\partial G_\gamma (\bs)}{\partial s_j}\right) \\  
 = & - \frac{(s_i - s_j)^2}{(s_i + \gamma)(s_j + \gamma)} \le 0, ~~ \text{since } s_i, s_j \ge 0 
 \end{aligned}
\end{equation}
The above would strictly follow the relation~\eqref{equ:schurscondition} if $s_i \neq s_j$. 
\end{proof} \qed


The above sigmoid measure have been utilized for the derivations of the proposed method IR-LP.

\subsection{M-estimators for robust statistics}\label{sec:irls}
In the context of other robust estimators such as M-estimators~\cite{huber2011robust}, iteratively reweighted least squares (IRLS) is well established as the optimizer. There are some recent methods \cite{aftab2015tpami,aftab15wacv} that utilizes IRLS for different geometric problems. However, there are fundamental and practical reasons to consider alternatives to IRLS for solving the MaxCon problem. M-estimators are well-studied in the field of robust statistics~\cite{huber2011robust}. The M-estimate is obtained as 
\begin{equation}\label{eq:mest} 
\argmin_{\btheta} \;\; \sum_{i=1}^n h \circ r(\btheta; \bx_i),
\end{equation}
where $h$ (called the M-estimator) is a symmetric, non-negative function with a unique minimum at zero. Standard M-estimators include Huber, Cauchy and Tukey robust costs; see Figure~\ref{fig:errorplot}. To solve~\eqref{eq:mest}, the classical IRLS method sequentially solves the weighted least squares problem 
\begin{equation}\label{eq:wls}
\begin{aligned}
\btheta^{(l+1)}  := & \argmin_{\btheta} \sum_{i=1}^n w^{(l)}_i r(\btheta; \bx_i)^2,\\
w^{(l)}_i  := &\frac{h^\prime(r(\btheta^{(l)};\bx_i))}{r(\btheta^{(l)};\bx_i)},
\end{aligned}
\end{equation}
where $h^\prime$ is the derivative of $h$. Aftab and Hartley~\cite{aftab15wacv} established the required properties of $h$ for IRLS to converge to a minimum of~\eqref{eq:mest}. {Note that there exists a closed form solution for each iteration of IRLS~\cite{gorodnitsky1997sparse} procedure under the linear residuals. Therefore, it is fast in linear case.} 

\begin{figure}[t]\centering
\includegraphics[width=0.6\columnwidth]{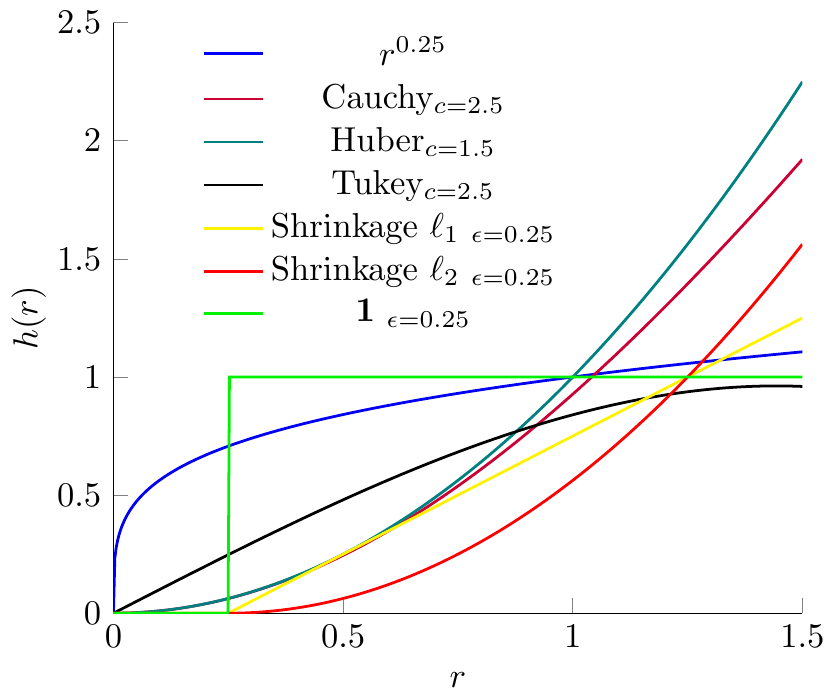}
\caption{Different M-estimators {$h(r)$} used for robust estimation including $\ell_1$-norm and $\ell_2$-norm of the shrinkage residuals. }
\label{fig:errorplot}
\end{figure}

Clearly,~\eqref{eq:mest}  reduces to MaxCon (P2, main manuscript) if $h$ is defined as
\begin{equation}\label{eq:hmaxcon}
h_\epsilon(r) = \mathbf{1}(r > \epsilon);
\end{equation}
see Figure~\ref{fig:errorplot} for a plot of $h_\epsilon$. However, $h_\epsilon$ does not satisfy the known properties of M-estimators for IRLS to guarantee convergence; see~\cite{daubechies2010iteratively,aftab15wacv} for details. Observe that we cannot even obtain useful weights in~\eqref{eq:wls} since $h^\prime_\epsilon$ is not defined everywhere, and where it is defined, $h^\prime_\epsilon(r) = 0$.

{Under the proposed choice of M-estimator} $h_\epsilon$ -- the formulation reduces to $\ell_0$ minimization of shrinkage residuals. We derive IRLS in terms of these shrinkage residuals in section \ref{sec:IRLS}. However, under linear residuals, unlike \eqref{eq:wls}, no closed form solution exists.

\section{Iterative Reweighted $\ell_2$ methods}\label{sec:IRLS}

Iterative reweighted least squares methods have been employed in \cite{chum03,lebeda2012fixing} for the refinement of the suboptimal \texttt{RANSAC} solutions. However, reweighted least squares is a well-known non-robust method \cite{aftab2015tpami,aftab15wacv} and we could certainly utilize similar idea and minimize the least squares of shrinkage residuals instead. \ie, 
\begin{equation} \label{eq:tlse}
\begin{aligned} 
\min_{\btheta,\; \bs} &
~~~ \sum_{i=1}^n s_i^2,  \\ 
~~\text{subject to} & 
~~~ r(\btheta; \bx_i) \leq \epsilon + s_i,\; s_i \ge 0, \\ 
\end{aligned} 
\end{equation}
which is also a convex optimization problem even under the linear residuals. However, again \eqref{eq:tlse} is a non-robust estimator according to the criterion for a robust estimator in \cite{aftab15wacv}; see Shrinkage $\ell_2$ in Figure \ref{fig:errorplot}. 

Let us robustify \eqref{eq:tlse} in the following manner 
\begin{equation}\label{eq:loganalysis2} 
\begin{aligned}
& ~~~~~~~~ \min_{\btheta,\; \bs}
& & \sum_{i=1}^n \log(s_i^2 + \gamma)  \\
& \text{subject to}
& & r(\btheta; \bx_i) \leq \epsilon + s_i,\; s_i \ge 0.\\
\end{aligned}
\end{equation}
which is also a smooth surrogate of MaxCon (P2, main manuscript). 

It can be verified easily that the KKT conditions, for $(\btheta^\ast,\bs^\ast)$ to be a solution of \eqref{eq:loganalysis2}, are the same as the weighted version of the least squares  \eqref{eq:tlse} of the shrinkage residuals with weight $w_i  := ({s_i^\ast}^2 + \gamma)^{-1}$. This leads to an IRLS estimation that minimizes \eqref{eq:loganalysis2} as follows
\begin{equation}
\left.
\begin{aligned}
 ~~~~~~(\btheta^{(l+1)}, \bs^{(l+1)}) := \argmin_{\btheta,\; \bs} \sum_{i=1}^n {w_i^{(l)}} s_i^2 \\
 \text{subject to~~~}  r(\btheta; \bx_i) \leq \epsilon + s_i, ~
 s_i \ge 0,~~~~~~\\
 ~~~~~~~~~~~~~~~w^{(l)}_i  := {(s_i^{(l)}}^2 + \gamma)^{-1}.~~~~~~~~
\end{aligned}
\right\rbrace \tag{S2} \label{eq:irlsmaxcon} 
\end{equation}
Each iteration of \eqref{eq:irlsmaxcon} is a quadratic program (QP) under linear or quasiconvex residuals. In the main draft of the paper, we call this method as IR-QP. Note that there is no closed form solution exists of each iteration and one needs to solve a convex quadratic program. 


{For $g_\gamma(x) = \log(x^2+\gamma)$, $g_\gamma^{\prime\prime}(x) = -\frac{2(x^2 - \gamma)}{(x^2+\gamma)^2}$. Thus, $g_\gamma(x)$ is convex in $(0, \sqrt{\gamma}]$ and concave in $[\sqrt{\gamma}, \infty)$. Therefore, it is unknown whether \eqref{eq:irlsmaxcon} converges to a minimum of~MaxCon (P2, main manuscript). The convergence analysis of the proposed method (S1) is addressed in the main manuscript.} From a practical standpoint, while the formulations of (S1, main manuscript) and \eqref{eq:irlsmaxcon} are quite similar, there are significant differences stated as follows:
\begin{itemize}
\setlength\itemsep{-0.25em}\setlength\parsep{-0.25em}
\item For each iteration, the objective of (S1, main manuscript) is linear while the objective of \eqref{eq:irlsmaxcon} is quadratic. Therefore, under linear residuals,~(S1, main manuscript) is an LP and~\eqref{eq:irlsmaxcon} is a QP. LPs are often faster than QPs.
\item It is well established that $\ell_1$ norm minimization tends to produce sparse results compared to $\ell_2$ norm. 

\end{itemize}
Experimentally, we observed that (S1, main manuscript) very frequently outperforms \eqref{eq:irlsmaxcon} given the same initializations.

\section{Additional Results}\label{sec:results}

To evaluate the proposed method IR-LP, in addition to the experiments in the main paper,  a number of experiments have been performed on synthetic and real datasets. We compared IR-LP against state-of-the-art approximate methods for MaxCon, with different initializations. \eg, 
\begin{itemize}
\item RANSAC + IR-LP: Proposed method IR-LP is initialized by the RANSAC solution. The runtime of RANSAC + IR-LP includes the runtime of IR-LP. The method is executed $100$ times and the average number of inliers found and the runtime are displayed in the table. 
\item $\ell_1$+IR-LP: Proposed method IR-LP is initialized by the  solution of iterative $\ell_1$~\cite{olsson10}. The runtime of $\ell_1$+IR-LP includes the runtime of IR-LP. 
\item $\ell_\infty$+IR-LP: Proposed method IR-LP is initialized by the  solution of iterative $\ell_\infty$~\cite{sim06}. The runtime of $\ell_\infty$+IR-LP includes the runtime of IR-LP. 
\item We also apply a locally optimized method LO-IR-LP, where we apply IRL1 for every successful RANSAC iterations. \ie, we apply IR-LP to refine the best solution found so far in RANSAC iterations. The maximum number of iterations were chosen to be $5$ for the inner loop. 
\end{itemize}

From the tables, it is very clear that proposed  IR-LP refines the outputs of the other methods with great extend. Overall LO-IR-LP produces better solution than RANSAC, however, we observe that better solution could be obtain by the refinement of the original RANSAC. 

Proposed method produces good results with almost any choice of initialization for homography estimation. It also produces much better solution for fundamental matrix estimation under a descent initialization. $\ell_1$+IR-LP does not work very well with poor initializations (iterative $\ell_1$~\cite{olsson10}). However, $\ell_\infty$+IR-LP works very well under relatively beter  initializations (iterative $\ell_\infty$~\cite{sim06}).


\newcolumntype{g}{>{\columncolor{gray}}r} 
\begin{sidewaystable}\setlength{\tabcolsep}{3pt}\renewcommand{\arraystretch}{1.3}\setlength{\tabcolsep}{3pt} 

\begin{center}
\small
  \begin{tabular}{p{1.85cm}rr|rr|rr|rr|rr|rr|rr|rr|rr} 
  \hline \hline
  \multicolumn{3}{c|}{}  & \multicolumn{2}{c}{}  & \multicolumn{2}{|c}{RANSAC} & \multicolumn{2}{|c}{} & \multicolumn{2}{|c}{$\ell_1$~\cite{olsson10} } & \multicolumn{2}{|c}{ } & \multicolumn{2}{|c}{$L_\infty$ ~\cite{sim06}} & \multicolumn{2}{|c}{LO-} & \multicolumn{2}{|c}{} \\ 
    \multicolumn{3}{c|}{Methods}  & \multicolumn{2}{c}{RANSAC}  & \multicolumn{2}{|c}{+ IR-LP} & \multicolumn{2}{|c}{$\ell_1$~\cite{olsson10} } & \multicolumn{2}{|c}{ + IR-LP } & \multicolumn{2}{|c}{$L_\infty$~ \cite{sim06}} & \multicolumn{2}{|c}{+ IR-LP} & \multicolumn{2}{|c}{RANSAC~ \cite{lebeda2012fixing}} & \multicolumn{2}{|c}{LO-IR-LP} \\ \hline 
     Datasets    			&   $n$  	& $|\cI^*|$   & $|\cI|$ ~~   & t (s)   & $|\cI|$           & t (s) & $|\cI|$         & t (s) & $|\cI|$ ~~          & t (s) & $|\cI|$  	 & t (s)     & $|\cI|$  	& t (s)     & $|\cI|$ ~~ 		& t (s)   & $|\cI|$~~ & t (s)\\ \hline 
     Val. Church & 108  	& 67    		& 64.48      & 0.046       	& 67.00 	& 0.072 & 26     		&     0.014   	 & 61	& 0.065 & 37       		& 0.021	  & 48 & 0.055 & 66.78		 & 0.066	& 66.33 & 0.247 \\ \hline 
     Uni. Library  & 665		& 552         & 542.31  	&  0.087             &  554.00   & 	0.357	& 435  		    & 	0.034 	&	554	& 0.147 & 251		    & 1.302	    & 554 & 1.507 &546.82		&0.274	& 554.00 & 0.387 \\ \hline 
     Keb. College  		& 399	& 311 		& 305.61 	& 0.067    	   	& 310.00	& 	0.126	& 102 			& 0.025 & 310	& 0.087  & 145 			& 0.145 	   & 310  & 0.218 &308.03		&0.137	& 310.12	&  0.361 \\ \hline 

Road Sign  		& 31 	& 29 		& 27.65 	& 0.006     	   	&  29.00	& 	0.012	& 14 			& 0.003 & 29	& 0.012  & 22 			& 0.002 	   & 29  & 0.012 & 28.61		& 0.011	 &  29.00	&  0.009 \\ \hline 

House  		& 492	& 355 		& 351.00 	& 0.031 	   	& 355.00 & 	0.250 	&  261	& 0.044 & 352 & 0.240  & 194	& 0.214  & 	351	& 0.292	 & 349.17  & 0.122 & 351.60	&  0.786 \\ \hline

     Cathedral  		& 544	& 481 		& 478.95  	& 0.043    	& 480.10	 & 	0.228	& 445	& 0.034 & 481 & 0.259 &  289		& 0.277      & 480		   & 0.373 & 473.41		& 0.193	& 479.25	& 0.395 \\

   \hline
\end{tabular}

\caption{Results for linearized homography estimation on Oxford VGG Datasets. $n$: number of point correspondences, $|\cI|$: is the consensus set size, $|\cI^*|$: optimal consensus set size, t(s): runtime in seconds }

~\\ 

 \begin{tabular}{p{2.5cm}r|rr|rr|rr|rr|rr|rr} 
  \hline \hline
  \multicolumn{2}{c|}{}  & \multicolumn{2}{c}{}  & \multicolumn{2}{|c}{RANSAC} & \multicolumn{2}{|c}{} & \multicolumn{2}{|c}{$\ell_1$ ~~\cite{olsson10} } & \multicolumn{2}{|c}{ } & \multicolumn{2}{|c}{$L_\infty$ ~\cite{sim06}} \\ 
    \multicolumn{2}{c|}{Methods}  & \multicolumn{2}{c}{RANSAC}  & \multicolumn{2}{|c}{+ IR-LP} & \multicolumn{2}{|c}{$\ell_1$ ~\cite{olsson10} } & \multicolumn{2}{|c}{ + IR-LP } & \multicolumn{2}{|c}{$L_\infty$ ~\cite{sim06}} & \multicolumn{2}{|c}{+ IR-LP} \\ \hline 
     Datasets    			&   n  	 & $|\cI|$    & t (s)   & $|\cI|$    & t (s)   & $|\cI|$     &  t (s)   & $|\cI|$  	& t (s)     & $|\cI|$  	& t (s) & $|\cI|$ & t (s) \\ \hline 
     Val. Church & 108  		& 77.20      & 0.084      &  85.00	& 0.480 & 26     		&  0.186    & 83 &  0.874  & 60       	& 1.221   &  85  & 4.924  \\ \hline

     Uni. Library  & 665		& 600.80  	&  0.113     	 &  611.35   &  	57.295	& 464  		    & 	2.744     	& 607  &  25.717 & 338		    & 8.421   & 611  &  25.344  \\ \hline 
     
     Keb. College  		& 399	& 307.35 			& 0.052    	 	& 307.85	& 10.646	& 92 & 0.639   	 &	308   & 6.420 & 177 		& 	2.126  &  309  & 7.816 \\ \hline 

      Road Sign  		& 34 	& 28.35 			&     0.010 	 	& 30.00 	& 1.425	& 2   & 0.558  	 &	2   & 2.052 & 23 		& 	0.242  &  30  & 1.636 \\ \hline 
      
       House  		& 492	& 353.05 			& 0.056    	 	& 354.95 	& 24.943 	& 273   & 1.922   	 &	355   & 24.842 & 277 		& 	1.278  &  353  & 24.855 \\ \hline 

        Cathedral  		& 544	& 474.00 	& 0.054    & 481.00 		& 	11.472	 	& 461	& 1.235	& 481   & 7.636  	 &	438   & 1.218   &  481  & 9.033 \\  
   \hline

\end{tabular}

\end{center}

\caption{Results for homography estimation with pseudo-convex residuals on Oxford VGG Datasets.}

\label{tab:linerHomography}  
\end{sidewaystable}

\newcolumntype{g}{>{\columncolor{gray}}r} 
\begin{sidewaystable}\setlength{\tabcolsep}{3pt}\renewcommand{\arraystretch}{1.3}\setlength{\tabcolsep}{3pt} 

\small

\begin{center}
  \begin{tabular}{p{2.5cm}rr|rr|rr|rr|rr|rr|rr|rr|rr} 
  \hline \hline
  \multicolumn{3}{c|}{}  & \multicolumn{2}{c}{}  & \multicolumn{2}{|c}{RANSAC} & \multicolumn{2}{|c}{} & \multicolumn{2}{|c}{$\ell_1$ ~\cite{olsson10} } & \multicolumn{2}{|c}{ } & \multicolumn{2}{|c}{$L_\infty$ ~\cite{sim06}} & \multicolumn{2}{|c}{LO-} & \multicolumn{2}{|c}{} \\ 
    \multicolumn{3}{c|}{Methods}  & \multicolumn{2}{c}{RANSAC}  & \multicolumn{2}{|c}{+ IR-LP} & \multicolumn{2}{|c}{$\ell_1$ ~\cite{olsson10} } & \multicolumn{2}{|c}{ + IR-LP } & \multicolumn{2}{|c}{$L_\infty$ ~\cite{sim06}} & \multicolumn{2}{|c}{+ IR-LP} & \multicolumn{2}{|c}{RANSAC ~\cite{lebeda2012fixing}} & \multicolumn{2}{|c}{LO-IR-LP} \\ \hline 
     Datasets    				&   n  	& $|\cI^*|$   & $|\cI|$    		& t (s)  & $|\cI|$    		& t (s)   	& $|\cI|$    		& t (s)   	& $|\cI|$           & t (s)  	& $|\cI|$  	& t (s)     & $|\cI|$  	& t (s)     & $|\cI|$  		& t (s) & $|\cI|$ & t (s) \\ \hline 
     Val. Church 			& 108  	 & 88 		& 76.64 			& 0.034 & 84.29	&	0.048  & 17			&	0.019   	 &  24 & 0.089 & 65   		&  0.0217 	& 86	& 0.042 & 80.59   		&  0.14 		 &  83.44 & 0.065 \\ \hline 
     Wad. College  			& 1051	& 365         & 287.84  	& 0.188     	   	&  343.40  &  0.341	 & 129  	    & 	0.025  	 		& 145	& 0.147 & 213		    & 0.0920     &  344    & 0.386 & 317.04		    &    0.22 		& 334.28 & 	0.683 \\ \hline 
     M. College I  			& 577		& 234 		& 212.6 		& 0.093      	   	&  232.00 & 0.142 & 79 		&  0.007		  	& 107 	& 0.132	   	& 58  		& 0.0325 	& 55 	& 0.142  &  222.84		&    0.167		  & 	230.6 & 0.361  	\\ \hline 
     
     M. College III  			& 313		& 214 		& 153.96 		& 0.082      	   	&  201.92 & 0.155 & 44 		&  0.003		  	& 55	& 0.053	   	& 174  		& 0.0175	& 210	& 0.248 &  184.9		&    0.083		  & 	199.92 & 0.117  	\\ \hline 
          
     Corridor  				& 124		& 72 			& 55.38 		& 0.153    	&  61.53	&  0.166
	 & 13 		&  0.006	 		& 18	&  0.046   & 55  		& 0.0094 	& 62 & 0.169 & 	59.46		&   0.128	 & 60.75	&  0.112\\ \hline 
	 
	 Dinosaur  		& 156		& 94 			& 68.60 		& 0.085    	&  85.00 	&  0.107	 &  	23	&  0.018 		& 33 &  0.043   & 70  		& 0.0129 	& 92 & 	0.037 & 	77.71		&   	0.072	 & 82.30 	&  0.093\\  
   \hline

\end{tabular}
\end{center}
\label{tab:linerFundamental}  
\caption{Results for linearized fundamental matrix estimation on Oxford VGG Datasets. $n$: number of point correspondences, $|\cI|$: is the consensus set size, $|\cI^*|$: optimal consensus set size, t(s): runtime in seconds} 
\end{sidewaystable} 

\bibliographystyle{splncs}

\bibliography{irl1}

\end{document}